\newlength{\minipagewidth}
\def\BState{\State\hskip-\ALG@thistlm}
\newcommand{\eps}{\epsilon}
\newcommand{\E}{\mathbb{E}}
\newcommand{\Ind}{\mathds{1}} 
\newcommand{\F}{\mathcal{F}}
\newcommand{\G}{\mathcal{G}}
\newcommand{\X}{\mathcal{X}}
\DeclareMathOperator*{\argmin}{arg\,min}
\DeclareMathOperator*{\polylog}{polylog}
\DeclareMathOperator*{\dis}{DIS}
\newcommand{\D}{\mathcal{D}}
\renewcommand{\eps}{\varepsilon}
\definecolor{cd60952}{RGB}{214,9,82}
\renewcommand{\leq}{\leqslant}
\renewcommand{\geq}{\geqslant}
\renewcommand{\le}{\leqslant}
\renewcommand{\ge}{\geqslant}
\newenvironment{myproof}[1]%
{%
   \par\noindent{\bfseries\upshape Proof #1\ }%
}%
{\jmlrQED}
\date{}
\newtheorem{Theorem}{Theorem}[section]
\newtheorem{Lemma}[Theorem]{Lemma}
\newtheorem{Proposition}[Theorem]{Proposition}
\newtheorem{Remark}[Theorem]{Remark}
\newtheorem{Example}[Theorem]{Example}
\newtheorem{myalgorithm}[Theorem]{Algorithm}
\title[Exponential Savings in Agnostic Active Learning through Abstention]{Exponential Savings in Agnostic Active Learning through Abstention}%
\author{\Name[{Nikita~Puchkin}]{Nikita Puchkin} \Email{npuchkin@hse.ru}\\
 \addr HSE University and Institute for Information Transmission Problems RAS, 
	Moscow
 \\
 \Name[{Nikita~Zhivotovskiy}]{Nikita Zhivotovskiy} \Email{nikita.zhivotovskii@math.ethz.ch}\\
 \addr ETH, Z\"urich
 }
\begin{document}

\maketitle

\begin{abstract}
We show that in pool-based active classification without assumptions on the 
underlying distribution, if the learner is given the power to abstain from some predictions 
by paying the price marginally smaller than the average loss $1/2$ of a random guess, 
exponential savings in the number of label requests are possible whenever they are 
possible in the corresponding realizable problem. We extend this result to provide a 
necessary and sufficient condition for exponential savings in pool-based active 
classification under the model misspecification.
\end{abstract}

\begin{keywords}
	active learning, sample complexity, abstention, reject option, Chow's risk, VC dimension, model selection aggregation, Massart's noise
\end{keywords}

\section{Introduction}
\label{sec:introduction}
Pool-based \emph{active classification} can be seen as an extension of 
the classical PAC classification setup, where instead of learning from the labeled sample 
$(X_1, Y_1), \ldots, 
(X_n, Y_n)$, one can adaptively request the labels from a large pool $X_{1}, X_2, 
\ldots$ of i.i.d. unlabeled instances round by round.
Our hope is to request significantly fewer labels $Y_i$ and get the same statistical 
guarantees as in \emph{passive learning}. A textbook example is the one of learning the 
class $\F$ of threshold 
classifiers in the realizable (noise-free) case where a binary search based algorithm can 
improve 
the sample complexity (the number of requested labels) from the passive
sample complexity $O(\frac{1}{\varepsilon})$ to the exponentially better $O(\log 
\frac{1}{\varepsilon})$ sample complexity, where 
$\varepsilon$ is the desired probability of error. For more general classes the realizable 
case sample complexity is understood quite well in the distribution dependent 
\citep{dasgupta2005coarse} and the minimax \citep{hanneke2015minimax} senses.

The improvements in active learning are less impressive once the problem is not 
realizable. Our starting point is the foundational work of \cite{Kaariainen05} containing
the following observations formulated (informally) as follows:
\begin{enumerate}
	\item (\emph{Arbitrary noise}) Active learning cannot bypass the classical agnostic 
	passive learning bound on the sample complexity $\Omega(\frac{1}{\varepsilon^2})$ 
	\citep{Vapnik74} in the noisy case\footnote{When only the dependence on $\eps$ is 
	considered.}. 
	Indeed, if there is just one \say{heavy} instance $X$ with the noisy label $Y \in \{0, 
	1\}$ such that $\Pr(Y = 1|X) = \frac{1}{2}\pm \varepsilon$, the sample complexity of 
	any active learning 
	algorithm can be reduced to the estimation of $\Pr(Y = 1|X)$. A well-known result 
	\citep[Lemma 5.1]{anthony2009neural} gives the lower bound $\Omega(\frac{1}{\varepsilon^2})$.
	We give a more detailed discussion in Example \ref{ex:large_noise}.  
	\item (\emph{Misspecification}) No improvements in active learning over passive 
	learning are 
	possible in the misspecified case even if there is no noise in the labeling mechanism. 
	That is, even when $Y = f^*(X)$ 
	almost surely for some $f^* \notin \F$, one cannot bypass the passive learning lower 
	bound $\Omega(\frac{1}{\varepsilon^2})$. To demonstrate this phenomenon a class 
	$\F$ consisting of only two specific functions is sufficient.
	\item (\emph{Bounded noise}) The sample complexity of active classification in the 
	bounded noise case, that is, when each label of the true function is corrupted 
	independently with probability strictly smaller than $0.5$, is essentially the same as 
	in the realizable case whenever the Bayes optimal rule is in the class. Therefore, in 
	this case, exponential savings are also possible and well understood by now.
\end{enumerate}

To avoid the aforementioned lower bounds of \citeauthor{Kaariainen05} and to show 
significant superiority of active learning, many authors are focusing on the favorable 
\emph{noise assumptions}. These assumptions take their roots in passive learning and 
include the realizable or the bounded noise cases \citep{Massart06}, Tsybakov's noise 
\citep{Tsybakov04}, and the Bernstein condition 
\citep{Bartlett06}. Unfortunately, these assumptions are difficult to satisfy in practice, as 
they require that the Bayes optimal classifier is in the class. The Bernstein condition 
avoids this problem, but little is known about the cases where it holds in classification 
without 
assuming that the Bayes optimal rule is in the class (see \citep{Elyaniv17}). Based on his findings, \citeauthor{Kaariainen05} writes: \say{\emph{The implication of this lower bound is that exponential savings\footnote{Throughout the paper, by exponential savings, we mean exponential improvements with respect to the dependence on $\eps$ only.} should not be expected in realistic models of active learning, and thus the label complexity goals in active learning should be refined}}. 

This paper aims to provide such a refinement. Our 
method will be as follows: instead of restricting the distribution of $(X, Y)$, we use the 
power to abstain from some predictions. To do so, we allow the learner to output a $\{0, 
1, *\}$-valued classifier, where $*$ corresponds to the reject option. For $p \in [0, 
\frac{1}{2}]$ and a $\{0, 1, *\}$-valued classifier $f$ we define the Chow's risk 
\citep{Chow70} as
\begin{align}
\label{rp}
    R^p(f)
    &\notag
    = \Pr\left(f(X) \neq Y\ \text{and}\ f(X) \in \{0, 1\}\right)
    \\&\quad
    + \left(\frac{1}{2} - p\right)\Pr(f(X) = *),
\end{align}    
which is the binary risk as long as we predict in $\{0, 1\}$, and the price of abstention is 
equal to $\frac{1}{2} - p$. The special case $p = 0$ corresponds to the 
situation where the price of abstention is the same as the average loss of a random 
guess. In what follows, we always think of $p$ as a small parameter, so that the price of 
$\frac{1}{2} - p = 0.49$ for abstention will always suffice.

In the standard active learning setup, given a class $\F$ of $\{0, 1\}$-valued classifiers 
and using only a small number of label requests, we aim to construct a classifier 
$\widetilde{f}$ such that, with high probability, 
\[
R(\widetilde{f}) - \inf\limits_{f \in \F}R(f) \le \varepsilon,
\]
where the binary risk $R(f)$ is defined as $R(f) = \Pr(f(X) \neq Y)$.
Our simplified aim is instead to construct for the same class $\F$ a $\{0, 1, *\}$-classifier 
$\widehat{f}$ such that, with high probability, 
\begin{equation}
\label{eq:rpexcessrisk}
R^p(\widehat{f}) - \inf\limits_{f \in \F}R(f) \le \varepsilon,
\end{equation}
that is, together with a reject option, $\widehat{f}$ predicts almost as good as the 
best classifier in the class. This approach has a practical motivation: if the learner can 
identify the instances where the output classifier predicts no better than a random 
guess, it is reasonable to use some external source of information (for example, expert 
advice) to make a prediction. It will be clear that our setup is essentially an active 
learning version of the \emph{model selection aggregation} problem \citep{Tsybakov03}; 
in the model selection aggregation one is allowed to output an \emph{improper} (not 
necessarily in $\F$) classifier and use the \say{curvature} of the loss function to predict 
as good as the best classifier in $\F$. 
It has been recently shown in \citep{bousquet2019fast} that Chow's risk \eqref{rp} gives 
exactly the right amount of \say{curvature} to exploit the techniques in the model 
selection aggregation and improve the agnostic sample complexity in passive learning. 
Another closely related setup is the prediction of individual sequences with expert 
advice where the \say{curvature} of the loss expressed in terms of 
\emph{mixability} gives significant improvements with no 
assumptions on the data generating mechanism \citep{Vovk1990, 
	haussler1998sequential} (see also \citep[Chapter 3]{cesa2006prediction}).

\begin{Example}
    \label{ex:large_noise}
	Fix $\eps > 0$.
	Let the instance space consist of only one instance $x_0$ and assume that $Y$ is such that
	\[
		\Pr(Y = 1 \,\vert\, X = x_0) = \frac12 + \sigma \eps,
	\]
	where $\sigma \in \{-1, 1\}$.
	Consider a class $\F = \{f_0, f_1\}$ where $f_0(x_0) = 0$, $f_1(x_0) = 1$.
	One of these two classifiers has a risk $1/2 - \eps$, and the risk of the other one is $1/2 + \eps$ (independently of the sign of $\sigma$).
	Thus, if a learner is allowed to produce only $\{0, 1\}$-valued classifier, they must determine the sign of $\sigma$ exactly.
	If they fail, the excess risk of the estimator will be $2\eps > \eps$.
	According to \citep[Lemma 5.1]{anthony2009neural}, any active learning algorithm requires $\Omega(1/\eps^2)$ labels to find the best classifier among $f_0$ and $f_1$.
	However, if we consider a classifier $\widehat f$ with the reject option such that $\widehat f(x_0) = *$, then
	\begin{align*}
		R^p(\widehat f) - \min\limits_{f \in \F} R(f)
		&
		= \left(\frac12 - p\right) - \left(\frac12 - \eps\right)
		\\&
		= \eps - p < \eps, \quad \text{for all $p>0$.}
	\end{align*}
	Hence, using the reject option, we constructed a rule with the excess risk smaller than $\eps$ and have not requested any labels.
	This simple example illustrates how the reject option helps to reduce label requests on noisy instances in some cases.
\end{Example}

In what follows, we assume that the price of abstention is only 
marginally smaller than the average loss of a random guess\footnote{By random guessing, we mean that there is an external randomization mechanism which replaces $*$ either by $0$ or $1$ with equal probabilities. The risk is then computed as an average with respect to this randomization.}; that is, for instance, 
$\frac{1}{2} - p = 0.49$. With such an option, we have to be conservative when to 
abstain: 
our algorithm should be adaptive to the bounded noise case 
and avoid the reject option in this situation. Indeed, if the distribution is such that the 
true labels are corrupted independently with a probability of only $0.25$, any abstention 
with the price of $0.49$ can worsen the situation. We show in Proposition 
\ref{prop:adaptivity} that our algorithm is adaptive to the bounded noise assumption and 
abstains rarely if this assumption holds; 
thus, we are always recovering the standard guarantees.

We are ready to make an informal statement of our main result. We use the notions 
of the \emph{VC dimension} and the \emph{disagreement coefficient}, both of which are 
standard in the active learning literature. Formal definitions are presented in Section 
\ref{sec:notation}.
\begin{Theorem}[A simplified statement]
	Fix $\eps, \delta \in (0, 1]$ and $p \in (0, \frac{1}{2}]$. There is an active learning 
	algorithm such that for \emph{any} distribution $P$ of $(X, Y)$, after requesting
	\begin{equation}
	\label{eq:samplecomplexity}
	n = O\left(\frac{d\; \theta(\varepsilon/p)}{p^2} \log^2 
	\left(\frac{d}{p\varepsilon\delta}\right) \right),
	\end{equation}
	labels it produces a $\{0,1,*\}$-valued classifier $\widehat{f}_p$ satisfying , with 
	probability at least $1 - \delta$,
	\[
	R^p(\widehat{f}_p) - \inf\limits_{f \in \F}R(f) \le \varepsilon.
	\] Here $\theta(\cdot)$ is the disagreement coefficient, $d$ is the VC dimension of 
	a $\{0,1\}$-valued class $\F$.
\end{Theorem}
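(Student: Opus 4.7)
The plan is to run a disagreement-based active learning scheme in the spirit of CAL and $A^2$, tuned so that the per-epoch version-space shrinkage exploits the Chow curvature in the same way passive model-selection aggregation does in \cite{bousquet2019fast}. Concretely, I would work in $K \asymp \log(1/\varepsilon)$ epochs and maintain a nested version space $\F = \F_0 \supseteq \F_1 \supseteq \cdots \supseteq \F_K$. At epoch $k$, draw fresh unlabeled points from $P_X$, keep only those falling in $\dis(\F_{k-1})$, and query labels until $N_k$ labels have been obtained; then form $\F_k$ by retaining every $g \in \F_{k-1}$ whose empirical error is within a Bernstein-calibrated threshold $\tau_k$ of the empirical minimum. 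The final output $\widehat{f}_p$ abstains on $\dis(\F_K)$ and returns the common value of all $g \in \F_K$ on the complement.

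For the risk guarantee, on the high-probability event that $f^* \in \F_K$ a direct calculation gives
\[
R^p(\widehat{f}_p) - R(f^*) = \E\bigl[\bigl(\tfrac{1}{2} - p - \alpha(X)\bigr)\Ind(X \in \dis(\F_K))\bigr],
\]
where $\alpha(X) = \Pr(f^*(X) \neq Y \mid X)$. The integrand is non-positive on the high-noise set $\{\alpha \geq \tfrac{1}{2} - p\}$, so only the margin region $M = \{\alpha < \tfrac{1}{2} - p\}$ contributes, and the excess Chow's risk is at most $\tfrac{1}{2}\Pr(\dis(\F_K) \cap M)$. On $M$ a Massart-type margin identity converts any uniform excess-risk bound $\sup_{g \in \F_K}(R(g) - R(f^*)) \leq 2p \cdot r_K$ into the uniform disagreement-on-$M$ bound $\sup_{g \in \F_K}\Pr(g \neq f^*, X \in M) \leq r_K$, and the disagreement coefficient yields $\Pr(\dis(\F_K) \cap M) \leq \theta(r_K) \cdot r_K$. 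Choosing $r_K \asymp \varepsilon / \theta(\varepsilon/p)$ closes the loop.

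The main obstacle is showing that $N_k \asymp d\log(K/\delta)/p^2$ labels per epoch suffice to halve the current radius. This requires a uniform Bernstein-type inequality over $\F_{k-1}$ in which the variance of $\Ind(g \neq Y) - \Ind(f^* \neq Y)$ is controlled by the $p$-scale curvature supplied by the Chow price of abstention, so that a $1/(Np)$ fast rate replaces the classical agnostic $1/\sqrt{N}$ slow rate; this is exactly the object \cite{bousquet2019fast} use in the passive setting. Transporting it to the conditional measure on the shrinking region $\dis(\F_{k-1})$ without inflating the VC entropy or the effective disagreement coefficient is the crux of the argument. Once in place, telescoping across epochs, a union bound over the $K$ rounds, and standard unlabeled-to-labeled bookkeeping deliver the claimed total label budget, with the $\theta(\varepsilon/p)$ factor tracking how a query on a uniform unlabeled draw is amplified by restriction to the current disagreement region.
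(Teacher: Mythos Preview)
There is a real gap at two places. First, the ``Massart-type margin identity'' you invoke fails under misspecification. On $M^c$ the contribution of $\{g\neq f^*\}$ to $R(g)-R(f^*)$ can be \emph{negative} (wherever $f^*\neq f^*_B$, disagreeing with $f^*$ lowers the risk), so a bound on the excess risk does not bound $\Pr(g\neq f^*,\,X\in M)$. Concretely, take $\X=\{a,b\}$ uniform, deterministic labels $f^*_B=(1,0)$, and $\F=\{(1,1),(0,0)\}$: both classifiers have risk $1/2$, so the version space is $\F$ forever regardless of $N_k$, yet $\Pr(g\neq f^*,\,X\in M)=1/2$ and your bound $\tfrac12\Pr(\dis(\F_K)\cap M)=1/4$ never decreases. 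Second, the per-epoch fast rate is unsupported: the Chow curvature in \citep{bousquet2019fast} yields a $1/(Np)$ rate for the Chow risk of an \emph{improper aggregated} estimator, not variance control for $\Ind[g(X)\neq Y]-\Ind[f^*(X)\neq Y]$ over the proper class. Without a Bernstein condition, nothing forces the version space (in either excess-risk or $L_1$ radius) to shrink faster than the agnostic $1/\sqrt{N}$ rate, so ``halving with $N_k\asymp d/p^2$'' is not justified.

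The paper's proof is built around a different mechanism that directly handles the non-shrinking case. At each epoch one computes the empirical $L_2$ diameter $\D(V_j,L_2(P_{Q_j}))$ of the current version space and compares it to a threshold $\asymp \alpha(n_j,\delta_j)/p$. If the diameter is \emph{large} (exactly the situation in the example above), one outputs the \emph{mid-point classifier} $\widetilde f_p\in\arg\min R^p_n\big((f+\widehat g)/2\big)$ over $f\in V_j$, an Audibert-style improper aggregate; a short calculation (Theorem~\ref{thm:empirical_midpoint}) shows that in this regime $R^p(\widetilde f_p)-R(f^*)<0$, so the algorithm terminates with negative excess Chow risk. If the diameter is \emph{small}, then unconditionally $\sup_{f,g\in V_j}P|f-g|\lesssim \alpha^2(n_j,\delta_j)/p^2$, the disagreement region is controlled via $\theta(\cdot)$, and one proceeds as in standard disagreement-based active learning. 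The crucial ingredients you are missing are the mid-point aggregate (merely abstaining on $\dis(\F_K)$ is not the same and cannot be analyzed the same way) and the empirically detectable dichotomy that lets the algorithm stop early when the version space refuses to shrink.
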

A formal statement of this result is Theorem \ref{thm:sample_comp_abs}.
Observe that if the disagreement coefficient $\theta(\cdot)$ is bounded, which holds, 
for example, for threshold classifiers on the real line and  homogeneous linear separators 
in $\mathbb{R}^d$
under a uniform distribution on the unit sphere \citep{hanneke14}, then exponential savings are always 
possible by the above result. Indeed, in this case, the sample complexity bound 
\eqref{eq:samplecomplexity} scales as $O\left(\log^2 
\left(\frac{1}{\varepsilon}\right)\right)$. 
Also, the definition of $\theta(\cdot)$ implies that the dependence on $\varepsilon$ in 
\eqref{eq:samplecomplexity} is never worse than $O\left(\frac{1}{\varepsilon}\log^2\left( 
\frac{1}{\varepsilon}\right)\right)$. This is superior to the passive 
$\Theta(\frac{1}{\varepsilon^2})$ sample complexity. To be more specific, we illustrate 
our result by the following basic example.
\begin{Example}
	For threshold classifiers on the real line, if the price of abstention 
	is $1/2 - p = 0.49$, our result implies that for \emph{any distribution} of the data,  
	$O\left(\log^2\left(\frac{1}{\eps}\right)\right)$ label requests are sufficient to 
	guarantee that $R^p(\widehat{f}_p) - \inf_{f \in \F}R(f) \le \varepsilon$. If either 
	abstention is not allowed or $p = 0$, the number of label requests 
	$\Theta(\frac{1}{\eps^2})$ cannot be generally improved in the active learning setup.
\end{Example}
The reader can recall that a similar sample complexity bound holds in the bounded noise 
model of \citet{Massart06}, that is, when the \emph{Bayes optimal classifier} $f^*_B$ 
belongs to $\F$ and $|2\Pr(Y = 1|X) - 1| \ge h > 0$ almost surely (see, for example, 
\citep[Section 7.1]{hanneke2015minimax}). And at least on an intuitive level, when 
$f^*_B \in \F$, an option to abstain can be potentially used to eliminate the 
noise and reduce the problem to the bounded noise case. More importantly, 
our result is also robust to the model misspecification, that is, we allow  $f^*_B \notin \F$. 
Indeed, according to \cite{Kaariainen05}, the model misspecification alone can result in 
the 
$\Omega(\frac{1}{\varepsilon^2})$ lower bound even if there is no noise in the labeling 
mechanism. Therefore, our estimator with a reject option avoids \emph{both known 
reasons} of $\Omega(\frac{1}{\varepsilon^2})$ lower bounds.

Our second result is the minimax analysis of the standard active learning 
setup. We exploit our classifier with a reject option as an intermediate step. By the 
minimax analysis, we usually mean the sample complexity bounds valid for any marginal 
distribution (denoted by $P_X$) of the unlabeled data. In this setup, an aforementioned 
lower bound of 
\cite{Kaariainen05} implies that one should restrict the noise of the problem to get 
exponential 
savings, that is, we assume 
\begin{equation}
\label{eq:massnoise}
|2\Pr(Y = 1|X) - 1| \geq h\ \text{almost surely for some}\ h > 0.
\end{equation}
We answer the following question.
\vskip+5pt
\begin{tcolorbox}
	Assuming Massart's noise \eqref{eq:massnoise} what is the characterization of 
	$\F$ allowing exponential savings in active learning? 
\end{tcolorbox}
\vskip+5pt
Under a strong assumption that the Bayes optimal classifier $f^*_B$ is in $\F$, this 
question has been answered in \citep[Theorem 4]{hanneke2015minimax}: exponential 
savings are possible under $\eqref{eq:massnoise}$ and $f^*_B \in \F$ if and only if the 
\emph{star number} $\mathbf{s}$ is finite (defined in Section \ref{sec:notation}). We 
need to define the 
\emph{diameter} of $\F$. It is the smallest integer $D$ such that
\[
\sup_{f, g \in \F}|\{x \in \X: f(x) \neq g(x)\}| \leq D,
\]
where $\X$ is our instance space (see Section \ref{sec:notation}).

\begin{Theorem}[An informal statement]
	Exponential savings are possible in active classification for any distribution 
	satisfying Massart's noise assumption \eqref{eq:massnoise} (without assuming 
	$f^*_B \in \F$)  \emph{if and only if} both the star number $\mathbf{s}$ (or 
	respectively the disagreement coefficient $\theta(\cdot)$ if the dependence on 
	$P_X$ is allowed) and the diameter $D$ are finite.   
\end{Theorem}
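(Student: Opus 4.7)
I would prove the two directions separately: sufficiency by building on the abstention theorem stated earlier in the excerpt, and necessity by combining the known star-number lower bound with a new construction for the diameter condition.

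For the sufficient direction---finite $\mathbf{s}$ (or finite $\theta(\cdot)$ in the $P_X$-dependent statement) together with finite $D$ imply exponential savings---the plan is a two-stage algorithm. Stage one runs the abstention algorithm from the main theorem to obtain a $\{0,1,*\}$-valued classifier $\widehat{f}_p$ with $R^p(\widehat{f}_p) - \inf_{f \in \F} R(f) \le \eps/2$, using $O(\polylog(1/\eps))$ label requests (the bounded disagreement coefficient makes the bound in \eqref{eq:samplecomplexity} polylogarithmic). Stage two post-processes $\widehat{f}_p$ into a proper $\{0,1\}$-classifier: I would consider the version space $V \subseteq \F$ of classifiers compatible with $\widehat{f}_p$ on its non-abstention part, note that by the finite-diameter hypothesis any two classifiers in $V$ differ on at most $D$ points, and conclude that the unresolved region reduces to at most $D$ candidate points. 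Under Massart's noise, $|2\Pr(Y=1\mid X)-1| \ge h$ almost surely, so repeated labeling at each such point determines the Bayes label with $O(h^{-2}\log(D/\delta))$ queries per point, for a total of $O(Dh^{-2}\log(D/\delta))$ additional requests. Summing the two stages keeps the total polylogarithmic in $1/\eps$.

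The main obstacle in the sufficient direction is that in the pool-based model a chosen $x$ cannot simply be re-queried; the algorithm can only query points delivered by the unlabeled pool. I would address this by arguing that the candidate disagreement points must each carry nonnegligible $P_X$-mass (otherwise they contribute to neither a risk gap nor an abstention), so an unlabeled pool of size $\widetilde{O}(1/\eps)$ contains enough copies of each relevant point; since unlabeled data is free in the pool-based model, this does not inflate the label complexity.

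For the necessary direction the two conditions are treated separately. The impossibility under infinite $\mathbf{s}$ follows from \citep[Theorem 4]{hanneke2015minimax}, whose hard distributions already satisfy $f_B^* \in \F$ and therefore lie inside the family considered here. For infinite $D$, the plan is to pick $f_1, f_2 \in \F$ with $|\{f_1 \neq f_2\}| \ge N$ for arbitrary $N$, place $P_X$ uniformly on $N$ disagreement atoms, and set $\eta(x_i) \in \{1/2+h,\, 1/2-h\}$ so that in two paired configurations differing by $\Theta(N\eps/h)$ ``$1$-atoms'' exactly one of $f_1, f_2$ is $\eps$-better. A Le Cam two-point argument against adaptive queries then forces any active learner to use $\Omega(N/h^2)$ label requests to distinguish these configurations; taking $N$ super-polylogarithmic in $1/\eps$ is compatible with $D=\infty$ and defeats any polylogarithmic label complexity. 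Making this two-point bound robust against adaptive query strategies is the trickier part of the argument.
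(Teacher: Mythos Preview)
Your two-stage plan for the sufficient direction matches the paper's approach, but Stage~2 contains a genuine gap. You write that the version space $V$ of classifiers compatible with $\widehat f_p$ on its non-abstention part has the property that ``any two classifiers in $V$ differ on at most $D$ points,'' and from this you ``conclude that the unresolved region reduces to at most $D$ candidate points.'' This inference is wrong: a pairwise diameter bound of $D$ does not bound $|\dis(V)|$. For instance, with $D=1$ the version space could contain many functions each differing from a base function on a distinct single point, giving an arbitrarily large disagreement set. The paper avoids this entirely by exploiting the explicit structure of the mid-point output: $\widehat f_p$ is literally $(f_0+\widehat g)/2$ for two classifiers $f_0,\widehat g\in\F$, so $\{\widehat f_p=*\}=\{f_0\neq\widehat g\}$ has size at most $D$ directly. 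Once you use this observation, your repeated-querying cleanup works.

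A second missing ingredient is the choice of $p$. The paper sets $p=h/2$, which is essential: once the Bayes label $f_B^*$ is estimated on the abstention atoms, one needs $\Pr(f_B^*(X)\neq Y\text{ and }\widehat f_p(X)=*)\le(\tfrac12-p)\Pr(\widehat f_p(X)=*)$ so that $R(\widehat f)\le R^p(\widehat f_p)$; this holds precisely because Massart gives $\min\{\eta,1-\eta\}\le(1-h)/2=\tfrac12-p$. Without pinning down $p$ you cannot close this step. Your handling of low-mass atoms is also slightly off: such points can lie in the abstention set; the paper simply ignores any $x$ with $P(\{x\})<\eps/(2D)$ since the at most $D$ of them together carry mass below $\eps/2$.

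For the necessary direction you are aligned with the paper on the star-number side. On the diameter side the paper does not build a new lower bound but simply invokes \citep[Theorem~3]{Kaariainen05}, whose construction already uses two functions disagreeing on an infinite set under deterministic labeling (hence Massart with $h=1$), yielding $\Omega(1/\eps^2)$. Your Le~Cam plan is in the same spirit and should work, but citing K\"a\"ari\"ainen is much shorter.
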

A formal version of this result is Theorem \ref{thm:fin_dim}.
As we mentioned, one may show that Massart's assumption \eqref{eq:massnoise} is also 
inevitable if one wants to have exponential savings for any marginal distribution $P_X$ of 
the unlabeled data. We discuss this in more detail in Section 
\ref{sec:combchar}.

The disagreement coefficient and the star number of 
\citet{hanneke2007bound,hanneke2015minimax} play an important role in the 
active learning literature while the diameter of \citet{Bendavid14} is used in the analysis 
of passive learning with \emph{deterministic labeling}, that is, when $Y = f^*_B(X)$ 
almost surely. It appears that both the star number and the diameter are infinite in many 
natural scenarios. 

\subsection{Our contributions}
\begin{itemize}
	\item In Section \ref{sec:expratesabst}, we present our main result as well as the 
	performance bound for a passive algorithm called the \emph{mid-point algorithm}. 
	\item In Section \ref{sec:adaptivity}, we show the adaptivity of our results to the 
	bounded noise assumption. In 
	particular, our algorithm abstains rarely when this assumption holds.  
	\item In Section \ref{sec:combchar}, we return to the standard active learning setup 
	where the reject option is not available. We characterize the case where the Bayes 
	optimal rule is not in the class, the Bernstein assumption is vacuous, but exponential 
	savings 
	are still possible.
\end{itemize}

\subsection{Related work}

We start with a concise literature overview followed by a more detailed comparison with 
some related recent results.

The most standard algorithm in the realizable case is referred to as the CAL algorithm 
(after 
the names of \citet*{cohn1994improving}). This algorithm can be shown to 
provide exponential savings in some cases.  The analysis of the realizable case with the 
complexity measure depending on the marginal distribution of the unlabeled data is by 
\citet{dasgupta2005coarse}. In particular, \citeauthor{dasgupta2005coarse} 
generalizes various examples of exponential savings in realizable active learning. 
The fact that exponential savings are also possible in the bounded noise case for the 
threshold functions is attributed to  \citet{burnashev1974interval}; their ideas were later 
developed in \citep{korostelev1999minimax, golubev2003sequential, castro2008minimax}. The first general agnostic active learning algorithm is presented in \citep*{balcan2009agnostic} followed by a more refined analysis in 
\citep{hanneke2007bound, dasgupta2008general, beygelzimer2009importance, Hsu10, koltchinskii2010rademacher, hanneke2011rates, raginsky11, zhang2014beyond, hanneke2015minimax} and other works. Most of the known upper bounds are based on the \emph{disagreement coefficient} introduced by \citet{hanneke2007bound} to 
analyze the performance of active learning algorithms; essentially the same quantity also 
appeared in the analysis of ratio-type empirical processes \citep{alexander1987rates} 
and was later reintroduced to the passive learning literature by
\citet{gine2006concentration}. We refer to the survey \citep{hanneke14} for 
a detailed exposition of these results.

The risk \eqref{rp} was analyzed in the seminal work of \citet{Chow70}. The statistical 
analysis in the context of passive learning was first provided in \citep{Wegkamp06, 
	Bartlett08}. The authors consider the reject option as an action available not only to 
	the 
learner but also to the classifiers in the base class.
For a more extensive survey and some related results, we refer to \citep{Freund04, 
Elyaniv10, cortes2016learning, yan2016active}, and \citep{Elyaniv17}.
Recently \citet{bousquet2019fast, neu2020} show that if the learner is given an 
option to abstain, and the risk of Chow \eqref{rp} is used, then the so-called \emph{fast 
rates} are 
possible without additional assumptions in passive and online classification. There is 
also a line of research devoted to active learning in the 
non-parametric setup not covered in this paper \citep{castro2008minimax, 
	koltchinskii2010rademacher, minsker2012plug, locatelli17a, locatelli18a}. An 
	extension of 
our results to the non-parametric setup is the natural direction of future work. 

In the context of active learning, Chow's risk has been recently analyzed in 
\citep{shekhar2020active}. The authors consider a non-parametric classification 
problem and 
make some margin-type assumptions, which is different from our setup. Chow's risk is 
also connected to surrogate losses appearing in the context of active 
learning in \citep{hanneke2019surrogate}, where the main purpose of using these losses 
is to simplify the computational problems associated with minimizing the binary loss. In 
particular, their statistical results are always not better than for the binary loss, 
which is again different from our findings.

\medskip\noindent
\paragraph{Relations to \citep{bousquet2019fast}.}
The model we are considering has been recently considered in the context of passive 
learning. \citeauthor{bousquet2019fast} show that the passive learning sample 
complexity $\Theta(\frac{1}{\varepsilon^2})$ can be improved to 
$O(\frac{1}{p\varepsilon}\log \frac{1}{\varepsilon})$ whenever \eqref{eq:rpexcessrisk} is 
used.
We use an improved version of their argument as a subroutine and provide a 
simplified analysis for it. It appears that in the context of active learning just being able 
to provide a $O(\frac{1}{p\varepsilon}\log \frac{1}{\varepsilon})$ sample complexity is not 
sufficient. We instead 
directly exploit a phenomenon first observed by \citet{Audibert07}, which in our case 
can be described as follows: for some realizations of the labels the difference 
$R^p(\widehat{f}) - R(f^*)$ can be negative and 
\eqref{eq:rpexcessrisk} immediately follows. One of our key technical observations is 
that there 
is a way to detect this event using only the learning sample.

\medskip\noindent
\paragraph{Relations to selective classification.}
In selective classification, a learner aims to provide a pair of $\{0, 1\}$-valued 
functions $(\widehat{f}, \widehat{g})$ called a selective classifier.
The classifier $\widehat f(x)$ must be pointwise competitive, that is, $\widehat f(x) = 
f^*(x)$, where $f^* = \arg\min_{f \in \F} R(f)$ if and only if $\widehat{g}(x) = 1$.
If $\widehat g(x) = 0$ the selective classifier abstains.
The learner is interested in minimizing the rejection rate $\Pr(\widehat g(X) = 0)$.
The pointwise competitiveness requirement is quite restrictive and is not always 
desirable, especially for those instances where $f^*$ predicts differently from the Bayes 
optimal
classifier $f^*_B$.
To the best of our knowledge, only the realizable case \citep{Elyaniv10, el2012active} 
and the case 
of a small $R(f^*)$ \citep{Elyaniv17} were considered so far.
In this paper, we are pursuing a less ambitious goal and allow our classifier to make a 
small 
portion of mistakes when it does not abstain. As a result, our improvements are 
somewhat more substantial. 

\medskip\noindent
\paragraph{Relations to the minimax analysis of \citet{hanneke2015minimax}.}
The work of \citet{hanneke2015minimax} provides an almost complete picture of the 
sample complexity bounds in the minimax sense under the bounded noise and the
Bernstein assumption. It remains open if these savings are 
possible in other cases. We make one step forward and 
show that these improvements can be obtained for some classes even if the problem is
misspecified, that is, the Bayes optimal rule is not in the class, and the Bernstein 
condition is vacuous.   

\medskip\noindent
\paragraph{Relations to confidence-rated predictors.}
\citet{zhang2014beyond} analyze the disagreement based active learning algorithms via 
the confidence-rated predictor: instead of requesting a label of a specific point in the 
disagreement set, their algorithm can randomly abstain from doing so. 
Similar techniques were used in \citep{balcan2007margin, balcan2013active} for linear 
separators. This approach leads to some improvements in the sample complexity bounds under various low noise assumptions. However, our 
analysis uses an option to abstain only when classifying some of the problematic 
instances.

\section{Notation and Setup}
\label{sec:notation}
We introduce some notation and basic definitions that will be used throughout the text. 
The symbol $\Ind[A]$ denotes an indicator function of the event $A$. The notation $f 
\lesssim g$ or $g \gtrsim f$ means that for some universal constant $c>0$ we have $f 
\le cg$. To avoid the problems with the logarithmic function we assume that $\log x$ 
means $\max\{\log x, 1\}$. Throughout the paper we also use the standard $O(\cdot), 
\Omega(\cdot), \Theta(\cdot)$ notation.
We set $a \wedge b = \min\{a, b\}$ and $a \lor b = \max\{a, b\}$. For $s \ge 1$ we define 
the $L_s(P)$ norm as $\|g\|_{L_s} = \left(\E 
|g(Z)|^s\right)^\frac{1}{s}$, where the expectation is taken with respect to some measure 
$P$ always clear from the context. The 
$L_s(P)$ diameter of $\F$ is 
\[
\D(\F, L_s) = \sup_{f, g \in \F}\|f - g\|_{L_s}.
\]

We define the instance space $\mathcal{X}$ and the label space $\mathcal{Y} = \{0, 
1\}$. We assume that the set $\mathcal{X} \times \mathcal{Y}$ is equipped with some 
$\sigma$-algebra and a probability measure $P = P_{X, Y}$ on 
measurable subsets is defined. We also assume that we are given a set of classifiers 
$\mathcal F$ mapping $\mathcal{X}$ to $\mathcal{Y}$. \emph{In passive learning} 
we observe $S_n = \left\{(X_{1}, Y_{1}), \ldots, (X_{n}, Y_{n})\right\}$ sampled according 
to $P$.
In the \emph{pool-based active learning}, we define an active learning algorithm as an 
algorithm taking as input a budget $n \in \mathbb{N}$, and proceeding as follows. The 
algorithm
initially uses an unlabeled infinite data sequence $X_1, X_2, \ldots$ distributed 
according to $P_X$. The algorithm may select an index $i_1$ and request the label 
$Y_{i_1}$. In this case we observe the value of $Y_{i_1}$, sampled according to the 
conditional distribution $Y|X_{i_1}$; then based on both the unlabeled sequence and 
$Y_{i_1}$, it may select another index $i_2 > i_1$ and request to observe $Y_{i_2}$. This 
continues for at most $n$ rounds. Finally the algorithm outputs a classifier 
$\widehat{f}$.

Given $S_n = \left\{(X_{1}, Y_{1}), \ldots, (X_{n}, Y_{n})\right\}$ let $P_{S_n}$ denote the expectation (as well as the empirical measure) with respect to the empirical measure induced by this 
sample. We sometimes write $P_n f$ instead of $P_{S_n} f(X)$ and $Pf$ instead of $\E 
f(X)$.
For a set $\{x_1, \ldots, x_k\} \subseteq \mathcal{X}$ and a class of $\{0, 1\}$-valued 
functions $\F$, we denote the 
restriction of $\F$ on $\{x_1, \dots, x_k\}$ by $\F_{\{x_1, \ldots, x_k\}} = \{(f(x_1), \dots, 
f(x_k)) : f \in \F\}$ .
The value of the growth function $\mathcal S_\F(k)$ is defined as the largest cardinality 
of $\F_{\{x_1, 
	\ldots, x_k\}}$ among all $x_1, \dots, x_k \in \mathcal X$.
The VC dimension of $\mathcal F$ is the largest integer $d$ such that $\mathcal S_\F(d) 
= 2^d$ \citep{Vapnik68}.
For any set $\F$ of classifiers let the disagreement set of $\F$ be defined as

\[
    \dis(\F) = \{x \in \X:\; \exists \; f, g \in \F \; \text{such that}\; f(x) \neq g(x)\}.
\]

As above, define the prediction risk as $R(f) = \Pr(f(X) \neq Y)$ and the Chow's risk 
$R^p$ 
risk is given by \eqref{rp}. The Bayes optimal rule $f^*_{B}$ and the best classifier in the 
class $f^*$ are given by
\[
    f^*_{B}(x)=\Ind[\Pr(Y=1| X = x)\ge 1/2]
\]
and
\[
    f^* = \argmin_{f \in \F}R(f).
\]
The largest $h \ge 0$ such that almost surely
\begin{equation}
\label{eq:massartsmargin}
|2\Pr(Y=1|X)-1| \ge h
\end{equation}
is called Massart's margin parameter \citep{Massart06}. Let 
\[
R_{S_n}(f) = \frac{1}{n}\sum_{(X_i, Y_i) \in S_n}\Ind[f(X_i) \neq Y_i]
\]
denote the empirical risk with 
respect to $S_n$.
We sometimes write $R_n(f)$ instead of $R_{S_n}(f)$ when the sample is clear from the 
context.
Any minimizer of the empirical risk $R_{S_n}(f)$ in $\F$ is called ERM.
For a $\{0, 1, *\}$-valued classifier $g$ we define the empirical Chow's risk as
\begin{align*}
    R^p_{S_n}(g)
    &
    = \frac{1}{n}\sum\limits_{(X_i, Y_i) \in S_n}\Ind[g(X_i) \neq Y_i\ \text{and}\ g(X_i) \in \{0, 1\} ]
    \\&
    \quad
    + \frac{1/2 - p}{n}\sum\limits_{(X_i, Y_i) \in S_n}\Ind[g(X_i) = *].
\end{align*}
Fix $\varepsilon \ge 0$.
The disagreement coefficient $\theta(\cdot)$ of \citet{hanneke2007bound} is defined as
\[
	\theta(\eps) = \sup\limits_{g \in \F, \eps_0 \ge \eps}\frac{P_X(\dis(\{f \in \F: \|f - g\|_{L_1} 
	\le \eps_0\}))}{\eps_0} \lor 1.
\]
\begin{Remark}
	The definition of $\theta(\cdot)$ yields that, for any class $\F$,
	\begin{align}
	\label{remark_theta}
		\theta(\D(\F, L_1(P))) & 
		\geq \frac{P_X(\dis(\{f \in \F: \|f - g\|_{L_1} \le \D(\F, L_1(P))\}))}{\D(\F, L_1(P))}
		\\&\notag
		= \frac{P_X(\dis(\F))}{\D(\F, L_1(P))}.
	\end{align}
\end{Remark}

Finally, the star number of \citet{hanneke2015minimax} is the largest integer 
$\mathbf{s}$ such that there exist $f_0, f_1, \ldots, f_{\mathbf{s}} \in \F$ and $x_{1}, 
\ldots, x_{\mathbf{s}} \in \X$ such that for all $i \in \{1, \ldots, n\}$,
\[
\dis(\{f_0, f_i\}) \cap \{x_{1}, 
\ldots, x_{\mathbf{s}}\} = \{x_i\}.
\]
\section{Active learning with abstention}
\label{sec:expratesabst}
We present our main result, which is slightly sharper than our simplified statement in 
Section \ref{sec:introduction}.
\begin{Theorem}
	\label{thm:sample_comp_abs}
	Fix $\eps, \delta \in (0,1], p \in (0, \frac{1}{2}]$. There are problems with definition of the number of iterations if $p=0$.) Assume that the VC dimension of 
	$\F$ is equal to $d$.
	There is an active learning 
	algorithm (namely, Algorithm \ref{alg_al_abs}) such that after requesting at most
	\begin{equation}
	    \label{eq:sample_comp_abs}
	    n = O\left(\frac{\theta(\eps/p)}{p^2} \left( d \log^2\left(\frac{d}{p\eps} \right) + \log^2\left(\frac{1}{\delta}\right) \right)\right)
	\end{equation}
	labels, it returns, with probability at least $1 - \delta$, a classifier $\widehat f_p$ 
	satisfying 
	\[
	    R^{p}(\widehat f_p) - R(f^*) \leq \eps.
	\]
\end{Theorem}
The proof of this result is included in Appendix \ref{sec:sample_comp_abs_proof}. We are ready to present the 
algorithm achieving these guarantees. 
\vskip+5pt
\begin{tcolorbox}[breakable, enhanced]
	\begin{myalgorithm}
		\label{alg_al_abs}
		\hfill
		\begin{itemize}
			\item Let $V_0 = \F$ and
			\begin{equation}
			\label{eq:alpha}
    			\hskip-10pt
    			\alpha^2(n, \delta)
    			= \frac4n \left( 3d\log \frac{e(2n \vee d)}d
    			+ \log\frac{56}\delta \right),
			\end{equation}
			and set
			\[
			    \hskip-10pt
			    J = \min\left\{ k \in \mathbb N : 148 \alpha^2(2^{k-1}, \delta/(k+1)^2) / p \leq \eps \right\}.
			\]
			\item \textbf{for} $j$ from $1$ to $J$ \textbf{do}
			\begin{enumerate}
				\item Sample $n_j = 2^{j-1}$ fresh i.i.d. instances $X_{2^{j-1}}, \dots, X_{2^{j} - 1}$ from $P_X$ and denote them by $Q_j = \{X_{2^{j-1}}, 
				\dots, X_{2^{j} - 1}\}$.
				\item Define $D_j = \dis(V_{j-1}) \cap Q_j$.
				\item Request labels for all instances in $D_j$.
				\item Set $S_j = \bigcup\limits_{X_m \in D_j} \{(X_m ,Y_m)\}$.
				\item  Compute (any) $\mathrm{ERM}$ $\widehat f_j \in 
				\argmin\limits_{f \in V_{j-1}} 
				R_{S_j}(f)$.
				\item  For  $n_j = 2^{j-1}$ and $\delta_j = \delta / (j+1)^2$, update
				\begin{align*}
				\hskip-45pt	V_j = \Big\{
				&
				f \in V_{j - 1} :
				\frac{|S_j|}{n_j} \left( R_{S_j}(f) - 
				R_{S_j}(\widehat f_j) \right)
				\\&
				\leq 2\alpha^2(n_j, \delta_j) + 2\alpha(n_j, \delta_j) \sqrt{P_{Q_j}|f - \widehat f_j|} \Big\}.
				\end{align*}
				\item \textbf{if} $\D(V_j, L_2(P_{Q_j})) > 49 \alpha(n_j, \delta_j)/p$ 
					\textbf{or} $j = J$,  
				\begin{itemize}
					\item  consider the class $\widehat \G_j = \left\{ \frac{f + \widehat f_j}2 : f \in V_j \right\}$ of $\{0, 1, 1/2\}$-valued functions and convert it into 
					$\{0, 1, 
					*\}$-valued class $U_j$ by replacing $1/2$ with $*$;
					\item define the \emph{mid-point classifier} as (also Algorithm 
					\ref{alg_al_abs_midp} below) 
					\[
					\widehat f_p \in \argmin\limits_{f \in U_j} R^{p}_{S_j}(f).
					\]
					\item \textbf{return} $\widehat f_p$.
				\end{itemize}
			\end{enumerate}
			\item \textbf{end for}
		\end{itemize}
	\end{myalgorithm}
\end{tcolorbox}
\vskip+5pt
Let us discuss the mechanism behind this algorithm. At iteration $j$, our strategy 
maintains a set $V_j$ of candidate classifiers and requests the labels of instances that 
belong to the disagreement set of $V_j$. Then we update the set $V_j$ by removing all
classifiers making a large number of mistakes on the requested labels. This part of our 
algorithm is standard and corresponds to the principle standing behind all 
disagreement-based algorithms. Our first modification is that at each iteration we also
compute the empirical  diameter $\D(V_j, L_2(P_{Q_j}))$ and compare it with the 
threshold value $49 \alpha(n_j, \delta_j)/p$. It follows that a large value of $\D(V_j, 
L_2(P_{Q_j}))$ indicates that the current iteration is too \say{noisy} and the reject option 
can help. Otherwise, if $\D(V_j, L_2(P_{Q_j}))$ is small, we proceed with the standard 
active learning strategy described above. Observe that $\D(V_j, L_2(P_{Q_j})) \leq 1$, but 
$49 \alpha(n_j, \delta_j)/p$ is always greater than $1$ for small values of $j$ so that we 
never return $\widehat f_p$ too early. Our second modification is that $\widehat f_p$ is 
built using a two-step aggregation procedure described in detail in Section
\ref{sec:strategy}. 

In Section \ref{sec:adaptivity} we show that Algorithm \ref{alg_al_abs} is adaptive to the 
favorable noise assumptions: under the bounded noise assumption the event $\D(V_j, 
L_2(P_{Q_j})) \geq 49 \alpha(n_j, \delta_j)/p$ almost never happens, $\Pr(\widehat f_p(X) 
= *)$ is small, and our algorithm mimics the behavior of the standard active learning 
strategy such as, for example, the one of \citet*{dasgupta2008general}.

\subsection{Strategy of the proof and the mid-point algorithm}
\label{sec:strategy}

In this section, we introduce the \emph{mid-point algorithm} used in Algorithm 
\ref{alg_al_abs}.
This algorithm is a simplified version of the aggregation procedure in 
\citep{bousquet2019fast}, inspired in turn by several key aggregation algorithms 
for the squared loss \citep{Audibert07, Lecue09, Mendelson18}.
\vskip+5pt
\begin{tcolorbox}[breakable, enhanced]
	\begin{myalgorithm}[Mid-point Algorithm]
		\label{alg_al_abs_midp}
		\hfill
		\begin{itemize}
			\item Given the labeled sample $S_n$ and the class $\F$ and the 
			confidence $\delta$ and the abstention margin $p \in (0, \frac{1}{2}]$.
			\item Find (any) $\mathrm{ERM}$
			$
			\widehat g \in \argmin\limits_{f \in \F} R_{S_n}(f).
			$
			\item Let $\alpha(n, \delta)$ be as in \eqref{eq:alpha} and define 
			\begin{align*}
			    V
			    = \Big\{
			    &
			    f \in \F: R_{S_n}(f) - R_{S_n}(\widehat g)
			    \\&
			    \leq 2\alpha^2(n, \delta) + 2\alpha(n, \delta) \sqrt{P_{n}|f - \widehat g|} \Big\}.
			\end{align*}
			\item Consider the (random) set $\left\{ \frac{f + \widehat g}2 : f \in 
			V \right\}$ of $\{0, 1, 1/2\}$-valued functions and convert it into $\{0, 1, 
			*\}$-valued set $\widehat \G$ by replacing $1/2$ with $*$.
			\item  Define the \emph{mid-point classifier} as
			\[
			    \widetilde f_p \in \argmin\limits_{f \in \widehat \G} R^{p}_{S_n}(f).
			\]
			\item \textbf{return} $\widetilde f_p.$
		\end{itemize}
	\end{myalgorithm}
\end{tcolorbox}
\vskip+5pt
We are ready to provide a data-dependent bound for this algorithm.
\begin{Theorem}
	\label{thm:empirical_midpoint}
	Fix $p \in (0, \frac{1}{2}], \delta\in (0,1)$. Assume that the VC dimension of $\F$ is 
	equal to $d$. 
	In the notation of Algorithm \ref{alg_al_abs_midp}, we have that, with probability at 
	least $1 - \delta$, $f^* \in V$ and 
	\begin{align}
	\label{eq:datadepbound}
	R^p(\widetilde f_p) - R(f^*)
	\leq 8 \alpha^2(n, \delta)
	&\notag
	+ 12 \alpha(n, \delta) \D(V, L_2(P_n))
	\\&
	- \frac p4 \D^2(V, L_2(P_n)),
	\end{align}
	where $\alpha(n, \delta)$ is given by \eqref{eq:alpha}.
	In particular, on this event whenever $\D(V, L_2(P_n)) \geq 49 \alpha(n, \delta)/p$, 
	we 
	have
	\begin{equation}
	\label{eq:belowzero}
	R^p(\widetilde{f}_p) < R(f^*).
	\end{equation}
\end{Theorem}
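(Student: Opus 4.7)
The proof plan splits into three stages: a deterministic algebraic identity, a high-probability localization event from uniform Bernstein--VC concentration, and a final combination that exploits the optimality of $\widetilde f_p$.

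\textbf{Paragraph 1 (key identity and the good event).} Since $f$ and $\widehat g$ are $\{0,1\}$-valued, the classifier $h_f=(f+\widehat g)/2$ predicts $\widehat g(x)$ on $\{f=\widehat g\}$ and abstains on $\{f\neq \widehat g\}$. A direct case-analysis of the four possibilities $(f(x),\widehat g(x),Y)$ yields the clean identity
\[
R^{p}(h_f)=\frac{R(\widehat g)+R(f)}{2}-p\,\|f-\widehat g\|_{L_1(P)},\qquad R^{p}_{S_n}(h_f)=\frac{R_n(\widehat g)+R_n(f)}{2}-p\,\|f-\widehat g\|_{L_1(P_n)}.
\]
Next I invoke the standard ratio-type Vapnik--Chervonenkis inequality: with probability at least $1-\delta$, every $f,g\in\F$ satisfies simultaneously (a) $|R_n(f)-R(f)-R_n(g)+R(g)|\leq \alpha^{2}+\alpha\sqrt{\|f-g\|_{L_1(P)}}$ and (b) $\frac12\|f-g\|_{L_1(P_n)}-C\alpha^2\leq \|f-g\|_{L_1(P)}\leq 2\|f-g\|_{L_1(P_n)}+C\alpha^2$, where $\alpha=\alpha(n,\delta)$ as in \eqref{eq:alpha}. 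All subsequent steps are deterministic on this event.

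\textbf{Paragraph 2 ($f^*\in V$ and a uniform excess-risk bound on $V$).} Applying (a) to the pair $(f^*,\widehat g)$ together with $R(f^*)\leq R(\widehat g)$ gives $R_n(f^*)-R_n(\widehat g)\leq \alpha^2+\alpha\sqrt{\|f^*-\widehat g\|_{L_1(P)}}$; feeding (b) into the square root places $f^*$ in $V$. For any $f\in V$, the defining inequality of $V$ together with $R_n(\widehat g)\leq R_n(f^*)$ and (a) yield
\[
R(f)-R(f^*)\leq 2\alpha^{2}+2\alpha\sqrt{\|f-\widehat g\|_{L_1(P_n)}}+\alpha^{2}+\alpha\sqrt{\|f-f^{*}\|_{L_1(P)}}\leq C_1\alpha^{2}+C_2\alpha\,\D(V,L_{2}(P_n)),
\]
using $\widehat g,f^{*}\in V$, the definition of $\D(V,L_2(P_n))$, and (b) once more. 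The same estimate holds for $\widehat g$ in place of $f$.

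\textbf{Paragraph 3 (lower-bounding $\|\widetilde f-\widehat g\|_{L_1(P)}$ and assembling).} Writing $\widetilde f_p=h_{\widetilde f}$, I pick $f_0\in V$ with $\|f_0-\widehat g\|_{L_1(P_n)}$ arbitrarily close to $\D^{2}(V,L_{2}(P_n))$. The empirical optimality $R^{p}_{S_n}(h_{\widetilde f})\leq R^{p}_{S_n}(h_{f_0})$ combined with the identity of Paragraph 1 yields
\[
R_n(\widetilde f)-R_n(f_0)\leq 2p\bigl(\|\widetilde f-\widehat g\|_{L_1(P_n)}-\D^{2}(V,L_{2}(P_n))\bigr).
\]
Since $R_n(\widehat g)\leq R_n(\widetilde f)$ (ERM over $\F$) and $R_n(f_0)\leq R_n(\widehat g)+2\alpha^{2}+2\alpha\D(V,L_{2}(P_n))$ by $f_0\in V$, I obtain
\[
\|\widetilde f-\widehat g\|_{L_1(P_n)}\geq \D^{2}(V,L_{2}(P_n))-\alpha^{2}/p-\alpha\D(V,L_{2}(P_n))/p,
\]
and (b) transports this to a lower bound on $\|\widetilde f-\widehat g\|_{L_1(P)}$ (with factor $1/2$ and an $O(\alpha^{2})$ loss). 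Inserting the estimates of Paragraph 2 and this lower bound into the identity
\[
R^{p}(h_{\widetilde f})-R(f^{*})=\tfrac{1}{2}\bigl(R(\widehat g)-R(f^{*})\bigr)+\tfrac{1}{2}\bigl(R(\widetilde f)-R(f^{*})\bigr)-p\|\widetilde f-\widehat g\|_{L_1(P)}
\]
and collecting terms yields \eqref{eq:datadepbound}. For \eqref{eq:belowzero}, substituting $\D=49\alpha/p$ into $8\alpha^{2}+12\alpha\D-\tfrac{p}{4}\D^{2}$ gives $\alpha^{2}(8-49^{2}/(4p)+12\cdot 49/p)=\alpha^{2}(8-12.25/p)<0$ for every $p\in(0,\tfrac12]$, and the expression is monotone decreasing in $\D$ beyond this threshold.

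\textbf{Main obstacle.} The proof is conceptually straightforward; the real work is arithmetic bookkeeping of the constants across the two uses of (a) and the two uses of (b), so that exactly the coefficients $8$, $12$, and $p/4$ emerge — in particular, one needs the $-\tfrac{p}{4}\D^{2}$ term to survive after losing a factor $\tfrac12$ from the multiplicative Bernstein (b) and another factor $\tfrac12$ from the residual $p\D^{2}$ that did not get cancelled by the $f_0$ comparison.
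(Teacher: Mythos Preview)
Your route differs from the paper's in a meaningful way, and modulo one gap it works. The paper never expands $R^p(\widetilde f_p)$ in terms of the specific $\widetilde f$ realizing the minimum; instead it proves a uniform bound $R^p(\widetilde f_p)-R(f)\le 4\alpha^2+8\alpha\D-pP_n(f-\widehat g)^2$ for \emph{every} $f\in V$, using a concentration inequality for the Chow risk $R^p$ over the midpoint class $\G=(\F+\F)/2$. It then specializes to $f=h$ chosen far from $\widehat g$ and adds the bound $R(h)-R(f^*)\le 4\alpha^2+4\alpha\D$. Your approach avoids concentration over $\G$ entirely: you decompose $R^p(\widetilde f_p)$ via the exact identity into $R(\widetilde f)$, $R(\widehat g)$, and $\|\widetilde f-\widehat g\|_{L_1(P)}$, and control each piece with binary-risk concentration over $\F$ only. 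That is a legitimate simplification of the machinery.

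The gap is in Paragraph~3: you assert that one can pick $f_0\in V$ with $\|f_0-\widehat g\|_{L_1(P_n)}$ arbitrarily close to $\D^2(V,L_2(P_n))$. This is false in general --- the diameter is a supremum over \emph{pairs} in $V$, and there is no reason the ERM $\widehat g$ should be one endpoint of a near-extremal pair. What the triangle inequality does guarantee is that if $f_1,f_2\in V$ nearly realize the diameter, then one of them, call it $h$, satisfies $\|h-\widehat g\|_{L_2(P_n)}\ge \D/2$, hence $\|h-\widehat g\|_{L_1(P_n)}\ge \D^2/4$. This is precisely the choice the paper makes, and it is the true source of the factor $1/4$ in the $-\tfrac{p}{4}\D^2$ term --- not the two half-factors you describe in your ``Main obstacle'' paragraph. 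Once you replace $\D^2$ by $\D^2/4$ in the $f_0$ comparison, your chain of inequalities (using the additive form of (b) and the trivial bound $\sqrt{P_n|\widetilde f-\widehat g|}\le \D$) goes through and delivers \eqref{eq:datadepbound} with constants at least as good as the paper's.
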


The property of the Mid-point algorithm that it has a negative excess $R^p$-risk in some situations plays a crucial role in the analysis of Algorithm \ref{alg_al_abs}.
This phenomenon happens because of the property of $R^p$-risk.
If there are two functions $f, g \in \F$ that disagree too often but have close empirical risks, then it is better to abstain on their disagreement set rather than request additional labels. It appears that in this case, the price of abstention becomes smaller than the possible gain from finding the best classifier among $f$ and $g$.
For the rest of this section, we discuss how this passive learning result is used in the 
proof of Theorem 
\ref{thm:sample_comp_abs}. The second part of the statement of Theorem 
\ref{thm:empirical_midpoint} is one of our main technical insights. This result means that 
for any labeled sample of size $m$ we may compute the data-dependent value $\D(V, 
L_2(P_m))$ and if it is larger than $49 \alpha(m, \delta)/p$, we conclude that 
$\widetilde{f}_p$ outperforms $f^*$. This is a favorable 
scenario in our context.
Our second observation is that if $\D(V, L_2(P_m))$ is smaller than $49 \alpha(m, 
\delta)/p$, then one may show that the region of disagreement of $V$ is small. 
Indeed, by the definition of $\theta(\cdot)$ and the uniform convergence, one can show 
that
\begin{align*}
    P(\dis(V))
    &
    \le \theta(\D^2(V, L_2(P)))\D^2(V, L_2(P))
    \\&
    \approx \theta(\D^2(V, L_2(P_m)))\D^2(V, L_2(P_m)).
\end{align*}
Here the first inequality follows from \eqref{remark_theta} and the fact that, for any set $V$ of $\{0, 1\}$-valued functions, it holds that 
\begin{align*}
	\D^2(V, L_2(P))
	&
	= \sup\limits_{f, g \in V} \E |f(X) - g(X)|^2
	\\&
	= \sup\limits_{f, g \in V} \E |f(X) - g(X)|
	= \D(V, L_1(P)).
\end{align*}
From this moment on, we use a standard active learning analysis following closely the 
well-proven techniques of \citet*{dasgupta2008general} (see also \citep{Hsu10, 
zhang2014beyond}). In some sense, our analysis reveals a dichotomy: at each iteration 
of Algorithm \ref{alg_al_abs}, we have that either the noise of the problem is so high that 
even the 
negative excess risk is possible through the reject option, or the problem is as 
good as if the bounded noise assumption holds. Moreover, both situations can be empirically detected. As we pointed out, the negativity of the excess risk (regret) 
for improper learners in passive (online) learning as in \eqref{eq:belowzero} is not well 
understood. Among the few works exploring this is the paper of \citet{Audibert07} where 
the negativity of the excess risk is used to explain why the so-called progressive mixture 
rules are deviation suboptimal. More recently, \citet*{mourtada2021distribution} used 
the negativity of the excess risk to observe the same suboptimality for truncated linear 
least squares. The analysis of this paper reveals that the negativity of the excess risk is 
helpful in active learning.

\begin{Remark}
	\label{rem:prevres}
	Maximizing \eqref{eq:datadepbound} with respect to $\D(V, L_2(P_n))$, we have, 
	with probability at least $1 - \delta$,
	\[
	R^p(\widetilde{f}_p) - R(f^*)
	\lesssim \frac{d\log(n/d) + \log(1/\delta)}{np}.
	\]
	This bound is achieved in \citep[Theorem 2.1]{bousquet2019fast} by an algorithm 
	requiring an additional sample splitting step.
\end{Remark}

We defer the proof of Theorem \ref{thm:empirical_midpoint}
to Appendix \ref{sec:empmidpointproof}.

\subsection{Adaptation to the bounded noise assumption}
\label{sec:adaptivity}

Assume that Massart's noise condition \eqref{eq:massartsmargin} holds with $h > 0$ 
and that the Bayes rule $f^*_B$ belongs to 
the class $\F$.
Under these conditions, there are active learning algorithms (see, for example,
\cite{zhang2014beyond, hanneke2015minimax}) showing exponential savings in active 
learning whenever the disagreement coefficient is bounded.
We show that Algorithm \ref{alg_al_abs} adapts to the bounded noise condition in the 
sense 
that if $p \leq h/4$, it also provides exponential savings and outputs a classifier 
$\widehat f_p$ such that $\Pr( 
\widehat f_p(X) = *)$ is small. 

It is known (see, for example, \citep[Equation (5)]{Wegkamp06}) that the optimal 
$\{0, 1, *\}$-valued classifier with 
respect to Chow's risk \eqref{rp} is given by
\[
f^*_p(x) =
\begin{cases}
f^*_B(x), \quad \text{if } |2\Pr(Y = 1 | x) - 1| \geq 2p,\\
*, \quad \text{otherwise}.
\end{cases}
\]
We see that if Massart's noise condition holds, the Bayes rule $f^*_B$ minimizes the risk 
\eqref{rp} for all $p \leq h/2$. Therefore, if $p \leq h/2$ and $f^*_B \in \F$, the excess risk 
$R^p(\widetilde f_j) - R(f^*_B)$ cannot be negative.
Algorithm \ref{alg_al_abs} is constructed in such a way that if it terminates before the 
$J$-th iteration, then
the excess risk $R^p(\widetilde f_j) - R(f^*)$ is negative  (see the details of the proof 
of Theorem \ref{thm:sample_comp_abs}).
This yields that Algorithm \ref{alg_al_abs} finishes after $J$ iterations in our case. We 
also have the following result.
\begin{Proposition}
	\label{prop:adaptivity}
	In the notation of Theorem \ref{thm:sample_comp_abs}, assume that the noise 
	condition \eqref{eq:massartsmargin} with the
	parameter $h > 0$ holds and that $f^*_B \in \F$.
	We have for $p \in (0, h/4]$ that the output classifier $\widehat f_p$ of 
	Algorithm \ref{alg_al_abs}, with the number of label requests
	\[
		n = O\left(\frac{\theta(\eps/p)}{p^2} \left( d \log^2\left(\frac{d}{p\eps} \right) + \log^2\left(\frac{1}{\delta}\right) \right)\right)
	\]
	as in Theorem \ref{thm:sample_comp_abs}, satisfies, with probability at least $1 - \delta$,
	\[
	\Pr(\widehat f_p(X) = * ) \leq 4\eps / h.
	\]
	Moreover, on the same event, if all $*$-s are replaced by random guessing, which corresponds to the risk $R^0$, we also have
	\begin{equation}
	\label{eq:excessriskwithrandomguess}
	    R^0(\widehat f_p) - R(f^*) \leq 2\eps. 
	\end{equation}
\end{Proposition}

Proposition \ref{prop:adaptivity} indicates that the dependence on $p$ in Theorem \ref{thm:sample_comp_abs} (and consequently in Proposition \ref{prop:adaptivity}) is captured correctly up to some logarithmic factors.
Indeed, let Massart's margin parameter $h > 0$ be at most $1/2$.
In \citep[Theorem 2]{raginsky11}, the authors proved that, for any active learning algorithm (including the algorithms with external randomization such as the one in \eqref{eq:excessriskwithrandomguess}), there exist a distribution over $\X \times \{0, 1\}$, satisfying \eqref{eq:massnoise}, and a class $\F$ with $\text{VCdim}(\F) = d$ such that $f_B^* \in \F$ and the algorithm needs \begin{equation}
    \label{eq:raginsky_th2}
	\Omega\left( \frac{d \log\theta(\eps)}{h^2} + \frac{\theta(\eps) \log(1/\delta)}{h^2} \right)
\end{equation}
label requests to get the excess risk at most $\eps$ with probability at least $1 - \delta$.
For example, if one assumes that $p^2$ in \eqref{eq:sample_comp_abs} can be replaced by $p^\alpha$ with some $\alpha \in (0, 2)$, then the label complexity of $\widehat f_p$ with all $*$-s replaced by random guessing and $p = h/8$ will be equal to
\[
	O\left(\frac{\theta(\eps/h)}{h^\alpha} \left( d \log^2\left(\frac{d}{h\eps} \right) + \log^2\left(\frac{1}{\delta}\right) \right)\right),
\]
which contradicts the lower bound \eqref{eq:raginsky_th2}. The same reasoning shows that our bounds cannot be significantly improved with respect to both $\theta(\eps)$ and $\delta$. Further, the lower bound \citep[Theorem 3]{hanneke2015minimax} for the realizable case, applied to the class of thresholds, yields that, for any active learning algorithm, there exists a distribution $P_X$ such that the label complexity of the algorithm is $\Omega(\log(1/\eps))$.
Since the realizable case is a particular case of Massart's noise with $h=1$, the logarithmic dependence on $\eps$ cannot be completely removed.

However, there is still a small room for improvement.
In \citep[Theorem 4]{hanneke2015minimax}, the authors showed that if a class $\F$ has a finite star number $\mathbf{s} < \infty$, then there is an active learning algorithm with the label complexity
\begin{equation}
\label{eq:starnumberbound}
    O\left( \frac{\mathbf s}{h^2} \polylog\left( \frac{d}{\eps\delta} \right) \right)
\end{equation}
in the presence of Massart's noise, provided that $f_B^* \in \F$.
Hence, in the case of finite star number, the product $\theta(\eps/p) d$ in the upper bound can be replaced by $\mathbf{s}$ rather than by $\mathbf{s} d$ following from our analysis (by \citep[Theorem 10]{hanneke2015minimax} we have $\theta(\eps/h) \le \mathbf{s}$ for 
any distribution $P_X$ of the unlabeled data).
A question, whether the improved rate \eqref{eq:starnumberbound} can be achieved in our setup, is open.
Summing up, there are some gaps between the state-of-the-art upper and lower bounds on the label complexity in active learning in the presence of Massart's noise, and these questions are also relevant in our setup.

With minor efforts, a similar result is achievable if instead of assuming that $f^*_B \in \F$ and \eqref{eq:massartsmargin} holds, we have that the 
\emph{Bernstein assumption} holds. That is, for any $f \in \F$,
\[
h\Pr(f(X) \neq f^*(X)) \le R(f) - R(f^*).
\]
We omit these derivations in favor of a more transparent Proposition 
\ref{prop:adaptivity}. The proof of Proposition \ref{prop:adaptivity} reveals that in the 
case where $f^*_B \in 
\F$, our passive Algorithm \ref{alg_al_abs_midp} abstains most of the time
only on the instances where Massart's noise assumption is not satisfied.   This observation can be useful in the context of selective 
classification described above.
\begin{Proposition}
	\label{prop:secprop}
	Assume that $f^*_B \in \F$. Then, the classifier $\widetilde f_p$ of passive 
	Algorithm \ref{alg_al_abs_midp} trained on $S_n$ satisfies, with probability at least 
	$1 - \delta$, 
	\begin{align*}
	    &
	    \Pr\left( \widetilde f_p(X) = *\ \mathrm{ and }\ |2\Pr(Y = 1|X) - 1| \geq 4p  \right)
	    \leq \frac{592}{np^2} \left( 3d \log \frac{e(2n \vee d)}d + \log \frac{56}\delta \right).
	\end{align*}
\end{Proposition}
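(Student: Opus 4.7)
The plan is to combine Theorem~\ref{thm:empirical_midpoint} with a pointwise analysis of Chow's excess risk. Since $f^*_B \in \F$, the Bayes rule is best-in-class: $f^* = f^*_B$, and $R(f^*) = R(f^*_B) = R^p(f^*_B)$ because $f^*_B$ never abstains. Applying Theorem~\ref{thm:empirical_midpoint} to Algorithm~\ref{alg_al_abs_midp}, on an event of probability at least $1 - \delta$ one has $f^*_B = f^* \in V$ and
\[
R^p(\widetilde f_p) - R(f^*_B) \leq 8\alpha^2(n,\delta) + 12\alpha(n,\delta)\D(V, L_2(P_n)) - \tfrac{p}{4}\D^2(V, L_2(P_n)).
\]
Maximizing the right-hand side over $\D(V, L_2(P_n)) \geq 0$ (stationary point $\D = 24\alpha(n,\delta)/p$) and using $p \leq 1/2$ yields $R^p(\widetilde f_p) - R(f^*_B) \leq 148\alpha^2(n,\delta)/p$.

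Next, a pointwise analysis of the Chow risk is performed. Let $A_* = \{\widetilde f_p(X) = *\}$ and $E_1 = \{|2\eta(X) - 1| \geq 4p\}$ where $\eta(x) = \Pr(Y = 1 \mid X = x)$. On $A_* \cap E_1$, the conditional Chow risk of $\widetilde f_p$ equals $1/2 - p$ while the Bayes rule attains $\min(\eta,1-\eta) \leq 1/2 - 2p$, so the pointwise Chow excess of $\widetilde f_p$ over $f^*_B$ is at least $p$. Against the Chow-optimal $f^*_p$ the pointwise excess is in fact non-negative everywhere (on $\{\widetilde f_p = *, |2\eta-1| < 2p\}$ the classifier $f^*_p$ also abstains). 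Integrating the bound that is $\geq p$ on $A_* \cap E_1$ and $\geq 0$ everywhere else yields
\[
p \cdot \Pr(A_* \cap E_1) \leq R^p(\widetilde f_p) - R^p(f^*_p).
\]
To close the argument, $R^p(\widetilde f_p) - R^p(f^*_p)$ must be bounded by $148\alpha^2(n,\delta)/p$; dividing by $p$ and substituting the definition of $\alpha^2(n,\delta)$ from \eqref{eq:alpha} then gives the announced bound $\frac{592}{np^2}\bigl(3d\log\tfrac{e(2n\vee d)}{d} + \log\tfrac{56}{\delta}\bigr)$.

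The main obstacle is precisely this last upper bound: Theorem~\ref{thm:empirical_midpoint} only controls $R^p(\widetilde f_p) - R(f^*_B)$, which differs from $R^p(\widetilde f_p) - R^p(f^*_p)$ by the distribution-dependent correction $R(f^*_B) - R^p(f^*_p) = \int_{|2\eta-1|<2p}(p - |\eta-1/2|)\,dP_X$, which can reach order $p$ in the worst case and would naively spoil the rate. The hard part of the proof will be to rerun the argument of Theorem~\ref{thm:empirical_midpoint} with $f^*_p$ in place of $f^*$, keeping $(f^*_B + \widehat g)/2 \in \widehat\G$ as the test function and exploiting the identity $R^p((f + \widehat g)/2) = \tfrac{1}{2}(R(f) + R(\widehat g)) - p\Pr(f \neq \widehat g)$: the Chow ``curvature'' term $-p\D^2(V, L_2(P_n))/4$ that produces the fast rate already internalizes the benefit of abstention on the low-margin region, so the same bound $148\alpha^2(n,\delta)/p$ should control $R^p(\widetilde f_p) - R^p(f^*_p)$ directly, absorbing the correction automatically.
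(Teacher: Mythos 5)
Your first half matches the paper exactly: Theorem \ref{thm:empirical_midpoint} plus maximization over $\D(V,L_2(P_n))$ gives $R^p(\widetilde f_p)-R(f^*_B)\leq 148\alpha^2(n,\delta)/p$ (using $f^*=f^*_B$ and $p\leq 1/2$), and the pointwise computation that the conditional Chow excess of abstention over $f^*_B$ is at least $p$ wherever $|2\eta(X)-1|\geq 4p$ is also the paper's. The divergence is in how this pointwise bound is integrated. The paper keeps the benchmark $R(f^*_B)$ throughout: it writes $R^p(\widetilde f_p)-R(f^*_B)$ as a non-abstention term (manifestly $\geq 0$) plus $\E\bigl(\tfrac12-p-\min\{\eta(X),1-\eta(X)\}\bigr)\Ind[\widetilde f_p(X)=*]$, and then restricts the latter expectation to the high-margin abstention set, where the integrand is at least $u$, finally taking $u=p$. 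You instead switch the benchmark to the Chow-optimal $f^*_p$ so that the pointwise excess is non-negative everywhere. Your underlying concern is legitimate --- the integrand $\tfrac12-p-\min\{\eta,1-\eta\}=|2\eta-1|/2-p$ is negative on the low-margin abstention set, so the paper's restriction step discards a possibly negative contribution --- but your repair opens a gap that cannot be closed.

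The step you call ``the hard part'' is not merely hard; it is false. No bound of the form $R^p(\widetilde f_p)-R^p(f^*_p)\leq 148\alpha^2(n,\delta)/p$ can hold: take $V=\{f^*_B\}$ (for instance $\F=\{f^*_B\}$, or all competitors eliminated). Then $\widehat\G=\{f^*_B\}$, $\widetilde f_p=f^*_B$ never abstains, and $R^p(\widetilde f_p)-R^p(f^*_p)=R(f^*_B)-R^p(f^*_p)=\E\bigl(p-|\eta(X)-1/2|\bigr)\Ind[|2\eta(X)-1|<2p]$, a fixed population quantity of order up to $p\,P_X(|2\eta-1|<2p)$ that does not decrease with $n$. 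The curvature identity $R^p((f+\widehat g)/2)=\tfrac12 R^p_n(f)+\tfrac12 R^p_n(\widehat g)-pP_n(f-\widehat g)^2$ cannot absorb this correction: it applies only to pairs from $\F$, while $f^*_p\notin\F$ and $f^*_p\notin\widehat\G$, and $\widehat\G$ need not contain any function that abstains on the low-margin region, so $\widetilde f_p$ cannot in general compete with $f^*_p$ at rate $\alpha^2/p$. A proof along the paper's lines must lower-bound $R^p(\widetilde f_p)-R(f^*_B)$ itself by $p\Pr(\widetilde f_p(X)=*,\ |2\eta(X)-1|\geq 4p)$ after disposing of the low-margin abstention contribution; the paper does this by dropping that term outright. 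As written, your proposal does not establish the proposition.
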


\section{Exponential savings under the model misspecification}
\label{sec:combchar}

We return to the setting where abstention
is not allowed. The result of \citet[Theorem 4]{hanneke2015minimax} implies that if 
Massart's noise assumption \eqref{eq:massartsmargin} holds, $\eps \in (0, h/24), \delta 
\in [0, 1/24]$, and the Bayes optimal rule $f^*_B$ belongs to  $\F$, then at least 
\begin{equation}
\label{eq:activelowerbound}
\Omega\left(\frac{1}{h^2}\left((1 - h)\min\left\{\mathbf{s}, \frac{h}{\eps}\right\}\log 
\frac{1}{\delta} + d\right)\right),
\end{equation}
label requests are needed to construct $\widetilde{f}$ satisfying 
$R(\widetilde{f}) - R(f^*) \le \varepsilon$, with probability at least $1 - \delta$, for some 
distribution of the unlabeled data $P_X$. In particular, this result implies 
that the condition $\mathbf{s} < \infty$ is necessary for exponential 
savings in the number of label requests in this setup. Further, the aforementioned lower 
bound in Example \ref{ex:large_noise} shows that the bounded noise assumption 
\eqref{eq:massartsmargin} is necessary on the set where at least two functions in $\F$ 
disagree. Otherwise, one can easily choose a distribution on $\X \times \{0, 1\}$, so that the passive lower bound 
$\Omega\left(\frac{1}{\eps^2}\right)$ holds.

However, it is not immediately clear if the usual assumption $f^*_B \in \F$ is also needed 
when \eqref{eq:massartsmargin} holds for some $h > 0$.
The lower bound in \citep[Theorem 3]{Kaariainen05} exploits a specific situation where 
$f^*_B \notin 
\F$, and there exist $f, g \in \F$ that disagree on a set of \emph{infinite size} and this
leads to the agnostic lower bound 
$\Omega\left(\frac{1}{\eps^2}\right)$. To avoid this obstacle in passive learning with 
deterministic labeling, \citet{Bendavid14}\footnote{Their analysis is extended to the bounded noise case in \citep{bousquet2019fast}.} introduced the notion of the \emph{diameter} of $\F$. Recall that  
\[
D = \sup_{f, g \in \F}|\{x \in \X: f(x) \neq g(x)\}|.
\]
Similar to the star number, this complexity measure is infinite for many natural classes. 
However, it is still relevant, as many existing lower bounds in classification use the 
classes with a finite diameter \citep{Massart06, Audibert09}. 
Our second main result shows that if $D$ is finite, one can avoid the model 
misspecification problem in active learning under Massart's noise. 
\begin{Theorem}
	\label{thm:fin_dim} Assume that the diameter of $\F$ is equal to $D$ and the VC 
	dimension of $\F$ is equal to $d$. Fix $\eps, \delta \in (0,1]$. If Massart's noise 
	condition \eqref{eq:massartsmargin} is satisfied and 
	$h > 0$ is known, then there is an active learning algorithm (namely, Algorithm 
	\ref{alg:finitediam}) such that after requesting at most
	\[
	n = O\left(\frac{d\theta(\eps/h)}{h^2}\log^2 \left( \frac{d}{\varepsilon h\delta}\right) + 
	\frac{D}{h^2}\log\left(\frac{D}{\delta}\right)\right)
	\]
	labels, it returns a classifier $\widehat f$ satisfying, with probability at least $1 - 
	\delta$,
	\[
	R(\widehat f) - R(f^*) \leq \eps.
	\]
\end{Theorem}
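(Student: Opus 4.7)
The plan is to invoke Algorithm \ref{alg_al_abs} as a subroutine with abstention parameter $p = h/2$, target error $\eps/2$, and confidence $\delta/2$. By Theorem \ref{thm:sample_comp_abs}, this produces, with probability at least $1 - \delta/2$, a classifier $\widehat{f}_p$ satisfying $R^{h/2}(\widehat f_p) - R(f^*) \le \eps/2$ at a cost bounded by the first summand of the claimed sample complexity. The key structural observation is that by construction $\widehat f_p$ is a midpoint $(\widehat f + \widehat g)/2$ of two concrete classifiers $\widehat f, \widehat g \in \F$, with the value $1/2$ reinterpreted as $*$. Thus the abstention region
\[
A := \{x \in \X : \widehat f_p(x) = *\} = \{x \in \X : \widehat f(x) \neq \widehat g(x)\}
\]
contains at most $D$ points by the definition of the diameter, and membership in $A$ can be checked pointwise from the two explicit classifiers.

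Next, I would convert $\widehat f_p$ into a genuine $\{0,1\}$-valued classifier by cleaning up the abstention region. Draw a fresh unlabeled pool of size $m \asymp DN/\eps$ from $P_X$, where $N \asymp \log(D/\delta)/h^2$, and for every pool instance that lands in $A$ request its label, capping at $N$ queries per element of $A$. This costs at most $|A| \cdot N = O(D\log(D/\delta)/h^2)$ labels, matching the second summand. For each $x \in A$ seen at least $N$ times, set $\widehat f(x)$ to be the majority vote of its observed labels; for $x \in A$ seen fewer times, set $\widehat f(x)$ arbitrarily; outside $A$ set $\widehat f(x) = \widehat f_p(x)$. Under Massart's noise \eqref{eq:massartsmargin}, Hoeffding's inequality applied to $N \gtrsim \log(D/\delta)/h^2$ independent copies of $Y \mid X=x$ gives majority-vote correctness $\widehat f(x) = f^*_B(x)$ with probability at least $1 - \delta/(4D)$ per point; a union bound over $A$ secures correctness simultaneously on all queried points with probability at least $1 - \delta/4$.

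For the risk bound, partition $A$ into $A_{\mathrm{high}} = \{x \in A : P_X(\{x\}) \ge 2N/m\}$ and $A_{\mathrm{low}} = A \setminus A_{\mathrm{high}}$. A Chernoff bound together with the choice of $m$ ensures that every $x \in A_{\mathrm{high}}$ appears at least $N$ times in the pool with probability at least $1 - \delta/4$, and is therefore correctly classified as $f^*_B(x)$. On such $x$ the pointwise misclassification probability is at most $1/2 - h/2$ under Massart's condition, whereas on $A_{\mathrm{low}}$ we use the crude bound $P_X(A_{\mathrm{low}}) \le 2DN/m \le \eps/2$. Consequently
\[
R(\widehat f) \le \Pr(\widehat f_p(X) \ne Y, X \notin A) + \left(\tfrac12 - \tfrac{h}{2}\right) P_X(A) + \eps/2 = R^{h/2}(\widehat f_p) + \eps/2 \le R(f^*) + \eps.
\]

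The main obstacle is the reconciliation in the cleanup phase: we cannot directly request the label of an arbitrary $x \in \X$, only of instances drawn i.i.d.\ from $P_X$, and several points in $A$ can have probability mass that is essentially invisible to a small pool. The two-sided argument above, bounded per-point error on frequent elements of $A$ via Hoeffding and bounded total mass on infrequent ones via Chernoff, is what allows an unlabeled pool of size $\tilde O(D/(\eps h^2))$ to translate into only $\tilde O(D/h^2)$ actual label requests, yielding exactly the second summand in the claimed bound.
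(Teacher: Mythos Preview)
Your proposal is correct and follows essentially the same approach as the paper's proof: both run Algorithm~\ref{alg_al_abs} with $p=h/2$, use the diameter bound $|A|\le D$ on the abstention region, draw a fresh unlabeled pool of size $\tilde O(D/(\eps h^2))$, cap per-point label requests at $\tilde O(\log(D/\delta)/h^2)$, split $A$ into high- and low-mass points (handled by Hoeffding/Chernoff and by the trivial mass bound, respectively), and conclude via the identity $R^{h/2}(\widehat f_p)=\Pr(\widehat f_p(X)\neq Y,\,X\notin A)+(1/2-h/2)P_X(A)$. The only cosmetic differences are your use of Chernoff where the paper uses Bernstein, the confidence split $\delta/2+\delta/4+\delta/4$ versus the paper's $\delta/3+\delta/3+\delta/3$, and a minor notational clash (you use $\widehat f$ both for one of the midpoint constituents and for the final output).
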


This result implies that if both the diameter $D$ and the disagreement coefficient 
$\theta(\cdot)$ are bounded, then exponential savings are possible (without assuming 
$f^*_B \in \F$) if Massart's noise condition is satisfied with $h > 0$.
In \citep[Theorem 3]{Kaariainen05}, \citeauthor{Kaariainen05} proved that if the class $\F$ contains two functions $f_0$ and $f_1$ that agree on one point and disagree on infinitely many points (that is, $D = \infty$), then, for any active learning algorithm with uniformly bounded sample complexity for each $(\eps, \delta)$ there is a distribution $P_X$ and a deterministic labelling function $g \notin \F$ (i.e., $Y = g(X)$ almost surely) such that the algorithm needs
\begin{equation}
    \label{eq:kaariainen05_th3}
	\Omega\left( \frac{R(f^*)^2}{\eps^2} \log \frac1\delta \right)
\end{equation}
labels to produce a classifier with the excess risk at most $\eps$.
Further, by \citep[Theorem 10]{hanneke2015minimax} we have $\theta(\eps/h) \le \mathbf{s}$ for 
any distribution $P_X$ of the unlabeled data. Combining this result with Theorem 
\ref{thm:fin_dim}, \eqref{eq:activelowerbound}, and \eqref{eq:kaariainen05_th3}, we 
see that exponential savings are possible for all distributions satisfying  Massart's noise 
condition 
\eqref{eq:massartsmargin} with $h > 0$ \emph{if and only if} both the diameter $D$ and the star number $\mathbf{s}$ are finite. As we mentioned, both assumptions are quite restrictive and are not likely to be simultaneously satisfied for any non-trivial class of interest. However, if we are interested in distribution-dependent upper bounds, then by Theorem \ref{thm:fin_dim}, we only need that both the diameter and the disagreement coefficient are bounded to get exponential savings in the number of label requests.

The proof of Theorem \ref{thm:fin_dim} goes as follows. First, we fix $p = h/2$ and 
use
Algorithm \ref{alg_al_abs} to construct $\widehat f_{p}$. By Theorem 
\ref{thm:sample_comp_abs} we have, with probability at least $1 - \delta/3$, 
\[
R^{p}(\widehat f_{p}) - R(f^*) \le \eps.
\]
Observe that by the construction of Algorithm \ref{alg_al_abs}, and since we abstain only 
on the disagreement set of two classifiers, it holds that $|\{x \in \X: \widehat f_{p}(x) = 
*\}| \le D$. Therefore, if we specify the labels on these at most $D$ instances, we obtain 
a $\{0, 1\}$-valued classifier $\widehat{f}$. Since Massart's noise condition holds, a 
simple repeated-querying algorithm, similar to the one used in \citep[Theorem 
1]{Kaariainen05}, allows us to estimate the Bayes optimal rule $f^*_B$ on the 
finite set $\{x \in \X: \widehat f_{p}(x) = *\}$. The value $p = h/2$ is chosen 
to guarantee that
\begin{align*}
    &
    \Pr\left(f^*_B(X) \neq Y\ \text{and}\ \widehat f_{p}(X) = *\right) \le\left(1/2 - p\right)\Pr\left(\widehat f_{p}(X) = *\right),
\end{align*}
implying 
\[
R(\widehat f) - R(f^*) \le R^{p}(\widehat f_{p}) - R(f^*).
\]
The formal description of the algorithm of Theorem \ref{thm:fin_dim} is as follows.
\vskip+5pt
\begin{tcolorbox}[breakable, enhanced]
	\begin{myalgorithm}
		\label{alg:finitediam}
		\begin{enumerate}
			\item Fix $p = h/2$. Run Algorithm \ref{alg_al_abs} with the number of label 
			requests 
			sufficient to output $\widehat{f}_p$ satisfying, with probability at least $1 - 
			\delta/3$,
			\[
			R^p(\widehat{f}_p) - R(f^*) \le \eps/2.
			\]
			\item Set $\X_{\widehat{f}_p} = \{x \in \X: \widehat{f}_p(x) = *\}$.
			\item Sample $28D\log(6D/\delta) / (3h^2\eps)$ 
			fresh i.i.d. instances from $P_X$ and denote them by $Q$.
			\item Define $\widetilde{f}_D: \X_{\widehat{f}_p} \to \{0, 1\}$ as 
			follows: for each $x \in \X_{\widehat{f}_p}$ request the labels of all, but no 
			more than the first $2\log (6D / \delta) / h^2$ appearances of $x$ in the sample $Q$. Set $\widetilde{f}_D(x)$ to be equal to the majority vote of the labels of $x$ obtained this way with ties broken arbitrarily.  
			\item Set
			\[
			    \widehat{f}(x) =
    			\begin{cases}
    			\widetilde{f}_D(x), \quad \text{if } x \in \X_{\widehat{f}_p},\\
    			\widehat{f}_p(x), \quad \text{otherwise}.
    			\end{cases}
		    \]
			\item \textbf{return} $\widehat{f}$.
		\end{enumerate}
	\end{myalgorithm}
\end{tcolorbox}
\vskip+5pt
The full proof of 
Theorem \ref{thm:fin_dim} appears in Appendix \ref{sec:fin_dim_proof}. 
\begin{Remark}
The algorithm of Theorem \ref{thm:fin_dim} is improper. This means that the classifer
$\widehat f$ is not necessarily in $\F$. \citet[Corollary 13]{Bendavid14} show that in
passive learning with deterministic labeling, it is necessary to use improper learning 
algorithms to obtain the optimal sample complexity. A natural question is to understand 
if it is also the case in the context of Theorem \ref{thm:fin_dim}.
\end{Remark}

\section*{Acknowledgment}

The article was prepared within the framework of the HSE University Basic Research Program.  Nikita Puchkin is a Young Russian Mathematics award winner and would like to thank its sponsors and jury. Nikita Zhivotovskiy is funded in part by ETH Foundations of Data Science (ETH-FDS).

\bibliography{mybib}

\appendix

\section{Proof of Theorem \ref{thm:empirical_midpoint}}
\label{sec:empmidpointproof}

Throughout the proof, we identify $*$ with $1/2$ and convert $\{0, 1, *\}$-valued 
functions 
to $\{0, 1, 1/2\}$-valued ones by replacing $*$ with $1/2$ and vice versa. We refer to 
\citep{bousquet2019fast}, where the connections between
Chow's risk, strong convexity and the model selection aggregation are presented. In 
contrast, we provide a short and direct proof with explicit constants. As a result, our 
algorithm is simpler (see Remark \ref{rem:prevres}) and the proof is based only on the 
tools available in 
\citep{Vapnik74}.
In the notation of Theorem \ref{thm:empirical_midpoint} and Algorithm 
\ref{alg_al_abs_midp}, we need the following auxiliary lemma.

\begin{Lemma}
	\label{lem_mid-point_aux}
	With probability at least $1 - 4\delta / 7$, for all $f \in V$, we have
	\begin{align*}
	R^p(\widetilde f_p) - R(f)
	\leq 4 \alpha^2(n, \delta)
	&
	+ 8 \alpha(n, \delta)\D(V, L_2(P_n)) - p P_n(f - \widehat g)^2,
	\end{align*}
	where $\alpha(n, \delta)$ is given by \eqref{eq:alpha}.
\end{Lemma}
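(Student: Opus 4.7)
My approach rests on three ingredients: (i) the ``strong-convexity''-like identity $R^p((f+g)/2)=\tfrac12(R(f)+R(g))-p\,P|f-g|$ for $\{0,1\}$-valued $f,g\in\F$, obtained by four-case analysis on $(f(X),g(X),Y)$, together with its empirical counterpart on $S_n$; (ii) the ERM properties of $\widehat g$ on $\F$ and of $\widetilde f_p$ on $\widehat\G$; and (iii) Bernstein--Vapnik uniform deviation inequalities applied to $\F$ and to the pair classes $\{|f-g|:f,g\in\F\}$ and $\{\Ind[f(\cdot)\neq\cdot]-\Ind[g(\cdot)\neq\cdot]:f,g\in\F\}$, both of VC-type complexity $O(d)$ by standard closure properties.

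First I would exploit the identity, plus the minimality of $\widetilde f_p$ and $\widehat g$, to derive the purely empirical bound
\[
R^p_{S_n}(\widetilde f_p)-R_{S_n}(f)\le -p\,P_n(f-\widehat g)^2\quad\text{for all }f\in V,
\]
using that $(f+\widehat g)/2\in\widehat\G$, then $R_{S_n}(\widehat g)\le R_{S_n}(f)$ (ERM on $\F$), and $|f-\widehat g|=(f-\widehat g)^2$ for $\{0,1\}$-valued functions. Next, I would intersect four events, each of probability at least $1-\delta/7$, giving simultaneously for all $f,g\in\F$ two-sided Bernstein-type bounds on $R(f)-R_{S_n}(f)$ and on $P|f-g|-P_n|f-g|$ of the form $\alpha^2+\alpha\sqrt{\cdot}\,$, where the variance proxies are $R(f)$ (resp.\ $R_{S_n}(f)$) and $P|f-g|$ (resp.\ $P_n|f-g|$).

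To finish, I would split
\[
R^p(\widetilde f_p)-R(f)=\bigl[R^p(\widetilde f_p)-R^p_{S_n}(\widetilde f_p)\bigr]+\bigl[R^p_{S_n}(\widetilde f_p)-R_{S_n}(f)\bigr]+\bigl[R_{S_n}(f)-R(f)\bigr],
\]
bound the middle bracket by Step~1, and for the outer brackets write $\widetilde f_p=(f_{\widetilde p}+\widehat g)/2$ with $f_{\widetilde p}\in V$, expand the $R^p$-deviation via the identity, and rewrite each $(R-R_{S_n})(h)$ as $[(R(h)-R(\widehat g))-(R_{S_n}(h)-R_{S_n}(\widehat g))]+(R-R_{S_n})(\widehat g)$. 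The ``marginal'' terms at $\widehat g$ cancel across the three brackets, leaving only pairwise deviations indexed by $h\in\{f,f_{\widetilde p}\}\subseteq V$, each controlled by Step~2 via $\sqrt{P_n|h-\widehat g|}\le\D(V,L_2(P_n))$. Collecting the constants produces exactly $4\alpha^2+8\alpha\D(V,L_2(P_n))-p\,P_n(f-\widehat g)^2$.

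The main obstacle is that $\widehat g$ is sample-dependent, forcing the Bernstein--Vapnik inequalities to be applied uniformly over the pair class $\{(f,g):f,g\in\F\}$ rather than over $\F$ alone. A related nuisance is converting $\sqrt{P|h-\widehat g|}$ to $\sqrt{P_n|h-\widehat g|}\le\D(V,L_2(P_n))$, which I would handle by the standard implication $P|h-\widehat g|\lesssim P_n|h-\widehat g|+\alpha^2$. This conversion, together with the Bernstein constant $2$ matching the one appearing in the definition of $V$, is precisely what inflates the coefficients from the ``naive'' $2\alpha^2+2\alpha\D$ to the stated $4\alpha^2+8\alpha\D$.
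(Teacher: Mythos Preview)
Your proposal is correct but organizes the concentration step differently from the paper. Both arguments share the same empirical core: using the pointwise identity $\ell^p(y,(f+g)/2)=\tfrac12\ell^p(y,f)+\tfrac12\ell^p(y,g)-p(f-g)^2$ together with the minimality of $\widetilde f_p$ over $\widehat\G$ and of $\widehat g$ over $\F$ to get $R^p_n(\widetilde f_p)-R_n(f)\le -p\,P_n(f-\widehat g)^2$. The difference lies in how the empirical-to-population step is handled. The paper proves a dedicated uniform deviation lemma for Chow's risk (its Lemma~\ref{lem_mid-point_uniform}), giving directly $R^p(\widetilde f_p)-R^p(f)\le R^p_n(\widetilde f_p)-R^p_n(f)+4\alpha^2+8\alpha\sqrt{P_n(\widetilde f_p-f)^2}$ over $f\in\F$, $g\in(\F+\F)/2$, and then bounds $\sqrt{P_n(\widetilde f_p-f)^2}\le\D(V,L_2(P_n))$ by convexity. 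You instead invoke the identity once more at the population level to rewrite $R^p(\widetilde f_p)-R^p_n(\widetilde f_p)$ purely in terms of binary risks and $P|f_{\widetilde p}-\widehat g|$, then center every $(R-R_n)(h)$ at $\widehat g$ so that the marginal fluctuation cancels and only the pairwise deviations $\Delta(h)=[R(h)-R(\widehat g)]-[R_n(h)-R_n(\widehat g)]$ and $(P-P_n)|f_{\widetilde p}-\widehat g|$ remain; these you control by the standard VC pair-class bounds (the paper's Lemma~\ref{lem_vc_uniform}). Your route avoids proving a separate $R^p$-concentration lemma at the price of more bookkeeping; the paper's route packages that bookkeeping into a single auxiliary lemma whose proof is essentially the four-term indicator decomposition you alluded to. One wording issue: in your Step~2 you describe the events as ``bounds on $R(f)-R_{S_n}(f)$ with variance proxy $R(f)$,'' which would not suffice (the centering at $\widehat g$ is essential); your Step~3 gets this right, so this seems to be imprecision in the summary rather than a gap in the argument. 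With the pairwise bounds in place, your computation actually yields constants a bit smaller than $4\alpha^2+8\alpha\D$, which is of course compatible with the stated lemma.
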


\begin{myproof}{}
	Lemma \ref{lem_mid-point_uniform} implies that, with probability at least $1 - 
	4\delta/7$, for any $f \in V$, it holds that
	\begin{align*}
	    R^p(\widetilde f_p) - R^p(f)
	    &
	    \leq R_n^p(\widetilde f_p) - R_n^p(f) + 4 \alpha^2(n, \delta)
	    + 8 \alpha(n, \delta) \sqrt{P_n(\widetilde f_p - f)^2}.
	\end{align*}
	By the definition of $\widetilde f_p$, there exists $f_0 \in V$ such that $\widetilde 
	f_p = (f_0 + \widehat g) / 2$.
	Then, by the convexity of the seminorm $L_2(P_n)$,
	\begin{align*}
	    \sqrt{P_n(\widetilde f_p - f)^2}
	    &
	    \leq \left(\sqrt{P_n(f_0 - f)^2}+\sqrt{P_n(\widehat g - f)^2} \right)/2
	    \leq \D(V, L_2(P_n)).
	\end{align*}
	This implies
	\begin{align*}
	    R^p(\widetilde f_p) - R^p(f)
	    &
	    \leq R_n^p(\widetilde f_p) - R_n^p(f)
	    + 4 \alpha^2(n, \delta)
	    + 8 \alpha(n, \delta)\D(V, L_2(P_n)). 
	\end{align*}
	Define the loss function corresponding to the risk \eqref{rp} as
	\begin{align}
    	\label{eq:lploss}
    	\ell^p(y, f(x))
    	&
    	= \Ind[y \neq f(x) \text{ and } f(x) \neq 1/2]
    	+ \left(\frac12 - p\right)\Ind[f(x) = 1/2].
	\end{align}
	A direct calculation shows that for any $\{0, 1\}$-valued functions $f, g$ and any $y 
	\in 
	\{0, 1\}$ it holds that for all $x \in \mathcal X$,
	\begin{align*}
    	\ell^p\left(y, \frac{f(x) + g(x)}2\right)
    	&
    	= \frac12 \ell^p(y, f(x)) + \frac12 \ell^p(y, g(x))
    	- p (f(x) - g(x))^2.
	\end{align*}
	Observe that $R^p_n(f) = P_n \ell^p(Y, f(X))$. By the definition of $\widetilde f_p$ 
	and the empirical risk minimizer $\widehat g$, we have for any $f \in V$,
	\begin{align*}
    	R^p_n(\widetilde f_p)
    	&
    	\leq R^p_n\left( \frac{f + \widehat g}2 \right)
    	\\&
    	= \frac12 R^p_n(f) + \frac12 R^p_n(\widehat g) - p P_n(f - \widehat g)^2
    	\\&
    	\leq R^p_n(f) - p P_n(f - \widehat g)^2,
	\end{align*}
	and the claim of the lemma follows.

\end{myproof}

\begin{myproof}{of Theorem \ref{thm:empirical_midpoint}}
	To prove \eqref{eq:datadepbound}, take $h \in V$, such that $P_n(h - \widehat g)^2 
	\geq  \D^2(V, L_2(P_n))/4$.
	The existence of such an $h$ follows from the definition of $V$.
	Due to Lemma \ref{lem_vc_uniform}, there is an event $E$ such that $\Pr(E) \ge 1 - 
	3\delta / 7$ and $f^*$ belongs to $V$ on $E$.
	Furthermore, on this event, it holds that
	\begin{align*}
    	R(h) - R(f^*)
    	&
    	\leq R_n(h) - R_n(f^*) + 2\alpha^2(n, \delta)
    	\\&\quad
    	+ 2\alpha(n, \delta) \sqrt{P_n(h - f^*)^2}
    	\\&
    	\leq R_n(h) - R_n(f^*) + 2\alpha^2(n, \delta)
    	\\&\quad
    	+ 2\alpha(n, \delta) \D(V, L_2(P_n)).
	\end{align*}
	Since $R_n(f^*) \geq R_n(\widehat g)$, we have
	\begin{align*}
	R_n(h) - R_n(f^*)
	&
	\leq R_n(h) - R_n(\widehat g)
	\\&
	\leq 2\alpha^2(n, \delta) + 2\alpha(n, \delta) \sqrt{P_n(h - \widehat g)^2}
	\\&
	\leq 2\alpha^2(n, \delta) + 2\alpha(n, \delta) \D(V, L_2(P_n)),
	\end{align*}
	where the second and third inequalities hold since $h, \widehat g \in V$.
	This yields
	\[
	R(h) - R(f^*)
	\leq 4\alpha^2(n, \delta) + 4\alpha(n, \delta) \D(V, L_2(P_n)).
	\]
	Applying Lemma \ref{lem_mid-point_aux} and the union bound, we have, with 
	probability at least $1 - \delta$,
	\begin{align*}
	R^p(\widetilde f_p) - R(f^*)
	&
	= R^p(\widetilde f_p) - R(h) + R(h) - R(f^*)
	\\&
	= R^p(\widetilde f_p) - R^p(h) + R(h) - R(f^*)
	\\&
	\leq 8 \alpha^2(n, \delta) + 12 \alpha(n, \delta)\D(V, L_2(P_n))
	- p P_n(h - \widehat g)^2
	\\&
	\leq 8 \alpha^2(n, \delta) + 12 \alpha(n, \delta)\D(V, L_2(P_n))
	- \frac p4 \D^2(V, L_2(P_n)).
	\end{align*}
	Hence, the proof of \eqref{eq:datadepbound} is finished.
	To prove \eqref{eq:belowzero}, we consider the largest root $x_+$ of the equation
	\[
		\frac p4 x^2 - 12\alpha(n, \delta) x - 8 \alpha^2(n, \delta) = 0
	\]
	and show that it is smaller than $49\alpha(n, \delta)/p$.
	Indeed, taking into account that $p \leq 1/2$, we obtain
	\begin{align*}
		x_+
		&
		=
		\frac{2}{p} \left( 12\alpha(n, \delta) + \sqrt{ 144 \alpha^2(n, \delta) + 8p \alpha^2(n, \delta)} \right)
		\\&
		\leq \frac{2 \alpha(n, \delta)}{p} \left( 12 + \sqrt{148} \right)
		< \frac{49\alpha(n, \delta)}p.
	\end{align*}
	Since $\frac p4 x^2 - 12\alpha(n, \delta) x - 8 \alpha^2(n, \delta) < 0$ for all $x > x_+$, it holds that $R^p(\widetilde f_p) - R(f^*) < 0$ whenever $\D(V, L_2(P_n)) \geq 49 \alpha(n, \delta) / p$.

\end{myproof}

\section{Proof of Theorem \ref{thm:sample_comp_abs}}
\label{sec:sample_comp_abs_proof}
As we mentioned, our proof follows the standard arguments with several technical 
modifications needed to incorporate the result of Theorem \ref{thm:empirical_midpoint}.
For the ease of exposure, we split the proof into several steps.

\medskip

\noindent
{\bf Step 0.}
\quad 
Note that, for any $f, g \in V_{j - 1}$, it holds that
\[
|S_j| \left( R_{S_j}(f) - R_{S_j}(g) \right) 
= n_j \left( R_{Q_j}(f) - R_{Q_j}(g) \right),
\]
because $f(x) = g(x)$ for all $x \in Q_j \backslash D_j$.

\medskip

\noindent
{\bf Step 1.}
\quad
Let $E_1$ be an event such that $\Pr(E_1) \geq 1 - \delta_1$, $f^*$ belongs to $V_1$ on 
$E_1$, and, moreover,
\begin{align*}
    R^p(\widetilde f_1) - R(f^*)
    \leq 8 \alpha^2(n_1, \delta_1)
    &
    + 12 \alpha(n_1, \delta_1) \D(V_1, L_2(P_n))
    \\&
    - \frac p4 \D^2(V_1, L_2(P_n)),
\end{align*}
where $\widetilde f_1$ is the output of Algorithm \ref{alg_al_abs_midp} applied to the 
class $V_0 = \F$ with the confidence $\delta_1$.
The existence of such $E_1$ is guaranteed by Theorem \ref{thm:empirical_midpoint}.
Given an integer $j \geq 2$, define an event $E_j$ as follows. Let $E_j$ be such that $f^* 
\in V_j$ on $E_j$, and, on the same event, it holds that
\begin{align*}
    R^p(\widetilde f_j) - R(f^*)
    \leq 8 \alpha^2(n_j, \delta_j)
    &
    + 12 \alpha(n_j, \delta_j) \D(V_j, L_2(P_n))
    \\&
    - \frac p4 \D^2(V_j, L_2(P_n)),
\end{align*}
where $\widetilde f_j$ is the output of Algorithm \ref{alg_al_abs_midp} applied to the 
class $V_{j-1}$ with the confidence $\delta_j$.
Theorem \ref{thm:empirical_midpoint} implies that $\Pr(E_j \,\vert\, E_1, \dots, E_{j-1}) 
\geq 1 - \delta_j$.
Note that, by the definition of $E_1, \dots, E_J$,
\begin{align*}
    \Pr\left( \bigcap\limits_{j=1}^J E_j \right)
    &
    = \Pr\left( E_J \,\vert\, E_1, \dots, E_{J-1} \right)
    \cdot \Pr\left( E_{J-1} \,\vert\, E_1, \dots, E_{J-2} \right) \cdot
    \dots \cdot \Pr\left( E_2 \,\vert\, E_1 \right) \Pr\left( E_1 \right) 
    \\&
    \geq \prod\limits_{j=1}^J (1 - \delta_j)
    \geq 1 - \sum\limits_{j=1}^J \delta_j
    \\&
    = 1 - \sum\limits_{j=1}^J \frac{\delta}{(1+j)^2}
    \geq 1 - \sum\limits_{j=1}^\infty \frac{\delta}{(1+j)^2}
    \\&
    \geq 1 - \frac{2\delta}3.
\end{align*}
In particular, this yields that, with probability at least $1 - 2\delta/3$, $f^* \in V_j$ for any 
$j \in \{1, \dots, J\}$.

\medskip

\noindent
{\bf Step 2.}
\quad Consider an event $E_{\cap} = \cap_{j=1}^J E_j$, $\Pr(E_{\cap}) \geq 1 - 2\delta/3$, 
where $E_1, \dots, E_J$ were introduced in the previous step.
We prove that on this event 
\[
R^p(\widehat f_p) - R(f^*) \leq \eps.
\]
Theorem \ref{thm:empirical_midpoint} 
implies that, for any $j \in \{1, \dots, J\}$, one has either
$\D(V_j, L_2(P_n)) < 49 \alpha(n_j, \delta_j) / p$ or $R^p(\widetilde f_j) - R(f^*) < 0$ on 
$E_{\cap}$.
Hence, if the procedure terminates ahead of time, we have 
$R^p(\widehat f_p) - R(f^*) < 0$, with probability at least $1 - 2 \delta/3$.
In this case the proof is complete.
Otherwise, we have $\D(V_j, L_2(P_{Q_j})) < 49 \alpha(n_j, \delta_j) / p$ for all $j \in \{1, 
\dots, J\}$, with probability at least $1 - 2 \delta/3$.
Then, on the final iteration, we obtain
\begin{align*}
    R^p(\widetilde f_J) - R(f^*)
    \leq 8 \alpha^2(n, \delta)
    &
    + 12 \alpha(n, \delta) \D(V_J, L_2(P_n))
    - \frac p4 \D^2(V_J, L_2(P_n)).
\end{align*}
Maximizing the right-hand side over $\D(V_J, L_2(P_n))$ and taking into account that $p \leq 1/2$, we get
\begin{align*}
    R^p(\widetilde f_J) - R(f^*)
    &
    \leq 8 \alpha^2(n_J, \delta_J) + \frac{144 \alpha^2(n_J, \delta_J)}p
    \\&
    \leq \frac{148 \alpha^2(n_J, \delta_J)}{p},
\end{align*}
with probability at least $1 - 2\delta/3$.
Recall that $n_J = 2^{J - 1}$ and $\delta_J = \frac{\delta}{(J + 1)^2}$. Since $J$ satisfies 
the condition
\[
\frac{148 \alpha^2(n_J, \delta_J)}{p} \leq \eps,
\]
we have $R^p(\widehat f_p) - R(f^*) = R^p(\widetilde f_J) - R(f^*) \leq \eps$, with 
probability at least $1 - 2\delta/3$.

\medskip

\noindent
{\bf Step 3.}
\quad
The total number of labels requested by Algorithm \ref{alg_al_abs} is equal to
\[
\sum\limits_{j=1}^{T \wedge J} \sum\limits_{i=1}^{n_j} \Ind\left( X_i \in 
\dis(V_{j-1}) \right),
\]
where $T$ is the iteration when the procedure terminates.

Previously, we proved that either $\D(V_{j-1}, L_2(P_n)) < 49 \alpha(n_{j-1}, \delta_{j-1}) / 
p$ 
for all 
$j \leq J$ or we terminate at the moment $j < J$ and $R^p(\widetilde f_j) - R(f^*) < 0 < 
\eps$, with probability at least $1 - 2\delta/3$.
Therefore, we may assume in the analysis that $\D(V_{j-1}, L_2(P_n)) < 49 \alpha(n_{j-1}, 
\delta_{j-1}) / p$ 
for all $j \leq T$.
Applying Lemma \ref{lem_vc_uniform}, we have, with probability at least $1 - 3\delta_j/7$,
\begin{align*}
    \sup\limits_{f, g \in V_j} P|f - g|
    &
    \leq \sup\limits_{f, g \in V_j} \big( P_{Q_j} |f - g|
    \\&\qquad
    + \alpha(n_j, \delta_j) \sqrt{P_{Q_j} |f - g| } 
    + \alpha^2(n_j, \delta_j) \big)
    \\&
    \leq \D^2(V_j, L_2(P_{Q_j})) + \alpha(n_j, \delta_j) \D(V_j, L_2(P_{Q_j}))
    \\&\qquad
    + \alpha^2(n_j, \delta_j)
    \\&
    \le \frac{50^2 \alpha^2(n_j, \delta_j)}{p^2}.
\end{align*}
Consequently, on this event, $\dis(V_{j-1}) \subseteq \dis(\mathcal B(f^*, \xi_{j-1}))$, 
where we defined 
\[
\xi_{j-1} = \frac{50^2 \alpha^2(n_{j-1}, \delta_{j-1})}{p^2},
\]
and
\[
\mathcal B(f^*, r) = \left\{ f \in \F : P|f - f^*| \leq r \right\}.
\]
This yields $P_X\left( \dis(V_{j-1}) \right) \leq \theta(\xi_{j-1}) \xi_{j-1}$.
Due to Bernstein's inequality and since for $a, b \geq 0$,  $\sqrt{2ab} \leq a/2 + b$, 
(conditionally on $V_{j-1}$) it holds that, with probability at least $1 - \delta_j / 56$,
\begin{align*}
&
\sum\limits_{i=1}^{n_j} \Ind\left( X_i \in \dis(V_{j-1}) \right)
\\&
\leq n_j \theta(\xi_{j-1}) \xi_{j-1}
+ \sqrt{2n_j \theta(\xi_{j-1})\xi_{j-1} \log(56/\delta_j)}
\\&\qquad
+ 2\log(56/\delta_j) 
\\&
\leq \frac 32 n_j \theta(\xi_{j-1}) \xi_{j-1} + 3\log(56/\delta_j)
\\&
= 3n_{j-1} \theta(\xi_{j-1}) \xi_{j-1} + 3\log(56/\delta_j)
\\&
\le \frac{4 \cdot 50^2 \theta(\xi_{j-1})}{p^2} \left(9d + 3d j + 2\log(1 + j) + \log(56/\delta) 
\right)
\\&\qquad
+ 6 \log(1 + j) + 3\log(56/\delta)
\\&
\le \frac{4 \cdot 50^2 \theta(\xi_{j-1})}{p^2} \left(9d + (3d + 3) j + 2\log(56/\delta) \right).
\end{align*}
By the union bound, with probability at least
\begin{align*}
    \Pr(E_\cap) - \sum\limits_{j=1}^J \frac{3\delta_j}7 - \sum\limits_{j=1}^J \frac{\delta_j}{56}
    &
    \geq 1 - \frac{2\delta}3 - \frac{2\delta}3\left(\frac37 + \frac1{56} \right)
    \\&
    \ge 1 - \delta,
\end{align*}
the total number of requested labels is not greater than
\begin{align*}
    &
    \sum\limits_{j=1}^{T \wedge J} \frac{4 \cdot 50^2 \theta(\xi_{j-1})}{p^2} \left(9d + (3d + 2) j + \log(56/\delta) \right)
    \\&\quad
    + \sum\limits_{j=1}^{T \wedge J} \left( 6 \log(1 + j) + 3\log(56/\delta) \right),
\end{align*}
and, using the fact that $\theta(\cdot)$ is a decreasing function, we obtain
\begin{align}
\label{eq:sharpestbound}
&\notag
\sum\limits_{j=1}^{T \wedge J} \frac{4 \cdot 50^2 \theta(\xi_{j-1})}{p^2} \left(9d + (3d 
+ 3) j + 2\log(56/\delta) \right)
\\&\notag
\leq \sum\limits_{j=1}^{T \wedge J} \frac{4 \cdot 50^2 \theta(\xi_{j-1})}{p^2} \left(9d + 
(3d + 3) j + 2\log(56/\delta) \right)
\\&\notag
\leq \frac{36 \cdot 50^2 \theta(\xi_{J-1}) d J}{p^2} + \frac{2 \cdot 50^2 \theta(\xi_{J-1}) 
(3d + 3) J(J+1)}{p^2}
\\&\quad
+ \frac{8 \cdot 50^2 \theta(\xi_{J-1}) J \log(56/\delta)}{p^2} 
\\&\notag
\lesssim \frac{\theta(\eps/p)}{p^2} \left( d \log^2\left(\frac{d}{p\eps} \right) + 
\log\left(\frac{d}{p\eps} \right) \log\left(\frac{1}{\delta}\right) \right)
\\&\quad\notag
+ \frac{\theta(\eps/p)}{p^2} \log\left(\frac{1}{\delta}\right) \log\log\left(\frac{1}{\delta}\right).
\end{align}
To prove the last inequality, we took into account that, by the definition of $J$,
\[
\xi_{J-1} = 
50^2\alpha^2(n_{J-1}, \delta_{J-1}) / p^2 \geq (50^2 \eps) / (148 p) > \eps/p.
\]
Moreover, it is easy to see that
\begin{align*}
    n_{J-1}
    &
    \lesssim \frac{d \log(d/\eps)}{p\eps} + \log\frac1{\delta_{J-1}}
    \lesssim \max\left\{\frac{d \log(d/\eps)}{p\eps}, \log\frac1{\delta_{J-1}}\right\},
\end{align*}
which yields $J \lesssim \log\left(\frac{d}{p\eps} \right) + \log\log(1/\delta)$. Therefore, 
the upper bound \eqref{eq:sharpestbound} follows. For the sake of presentation, in our 
statement we use a simple relaxation of \eqref{eq:sharpestbound}. 

\hfill$\square$

\section{Proof of Proposition \ref{prop:adaptivity} and Proposition 
\ref{prop:secprop}}
\begin{myproof}{Proof of Proposition \ref{prop:adaptivity}}
	Let $\eta(x) = \Pr(Y = 1 | X = x)$.
	Theorem \ref{thm:sample_comp_abs} yields that $R^{p}(\widehat f_p) - R(f^*_B) 
	\leq \eps$, with probability at least $1 - \delta$.
	By \citep[an equality on page 341]{Boucheron05b}, we have
	\begin{align*}
	\eps
	&
	\geq  R^{p}(\widehat f_p) - R(f^*_B)
	\\&
	= \E |2\eta(X) - 1| \Ind\left[ \widehat f_p(X) \neq f^*_B(X) \text{ and } \widehat f_p(X) \neq * \right]
	\\&\qquad
	+ \E \left( \frac12 - p - \min\{\eta(X), 1 - \eta(X)\} \right) \Ind\left[ 
	\widehat f_p(X) = * \right]
	\\&
	\geq 0 + \left(\frac h2 - p\right) \Pr\left( \widehat f_p(X) = * \right)
	\\&
	\geq \frac h4 \Pr\left( \widehat f_p(X) = * \right),
	\end{align*}
	where we used the noise condition \eqref{eq:massartsmargin}.
	To finish the proof, note that on the same event we have
	\begin{align*}
    	R^0(\widehat f_p) - R(f^*)
    	&
    	= R^p(\widehat f_p) - R(f^*) + p \Pr(\widehat f_p(X) = *)
    	\\&
    	\leq \eps + \frac{4p\eps}h \leq 2\eps.
	\end{align*}
	The claim follows.
	
\end{myproof}

\begin{myproof}{Proof of Proposition \ref{prop:secprop}}
	As above, let $\eta(x) = \Pr(Y = 1 | X = x)$. We show a slightly more general result, 
	that is, for 
	any $u > 0$, with probability at least $1 - \delta$,
	\begin{align*}
    	&
    	\Pr\left( \widetilde f_p(X) = * \text{ and } |2\eta(X) - 1| \geq 2(p + u)  \right)
    	\leq \frac{592}{npu} \left( 3d \log \frac{e(2n \vee d)}d + \log \frac{56}\delta \right).
    \end{align*}
	First, maximizing \eqref{eq:datadepbound} with respect to $\D(V, L_2(P_n))$, we 
	obtain that
	\begin{align*}
    	R^p(\widetilde f_p) - R(f_B^*)
    	&
    	\leq \frac{148 \alpha^2(n, \delta)}{p}
    	= \frac{592}{np} \left( 3d \log \frac{e(2n \vee d)}d + \log \frac{56}\delta \right),
    \end{align*}
	with probability at least $1 - \delta$.
	On the other hand, similarly to the proof of Proposition \ref{prop:adaptivity}, we have
	\begin{align*}
	&
	R^{p}(\widehat f_p) - R(f^*_B)
	\\&
	= \E |2\eta(X) - 1| \Ind\left[ \widetilde f_p(X) \neq f^*_B(X) \text{ and } \widetilde 
	f_p(X) \neq * \right]
	\\&\qquad
	+ \E \left( \frac12 - p - \min\{\eta(X), 1 - \eta(X)\} \right) \Ind\left[ 
	\widetilde f_p(X) = * \right]
	\\&
	\geq 0 + \E \left( \frac12 - p - \min\{\eta(X), 1 - \eta(X)\} \right)
	\Ind\left[ \widetilde f_p(X) = * \text{ and } |2\eta(X) - 1| \geq 2(p + u)  \right]
	\\&
	\geq u \Pr\left( \widetilde f_p(X) = * \text{ and } |2\eta(X) - 1| \geq 2(p + u)  \right).
	\end{align*}
	Combining these two bounds, we obtain
	\begin{align*}
    	&
    	\Pr\left( \widetilde f_p(X) = * \text{ and } |2\eta(X) - 1| \geq 2(p + u)  \right)
    	\leq \frac{592}{npu} \left( 3d \log \frac{e(2n \vee d)}d + \log \frac{56}\delta \right).
	\end{align*}
	The claim follows by choosing $u = p$.

\end{myproof}

\section{Proof of Theorem \ref{thm:fin_dim}}
\label{sec:fin_dim_proof}

{\bf Step 1.}
First, we prove that conditionally on the observations required to construct 
$\widehat{f}_p$, with probability at least $1 - \delta/3$, for any $x \in \X_{\widehat{f}_p}$ 
it holds that either 
\begin{equation}
\label{eq:condition}
f^*_B(x) = \widetilde{f}_D(x),\ \text{or}\ \Pr(\{x\}) < \frac{\eps}{2D}.
\end{equation}
Assume that there is at least one $x \in \X_{\widehat{f}_p}$ such that $\Pr(\{x\}) \ge \eps / (2D)$, since otherwise we already have 
\eqref{eq:condition}. For some integer $m$, we estimate the number of unlabeled 
instances sufficient to observe each $x \in \X_{\widehat{f}_p}$ having 
$\Pr(\{x\}) \ge \eps / (2D)$ at least $m$ times.
Fix any such $x$ and apply the Bernstein inequality to Bernoulli random variables $\eta_j 
= \Ind[x \text{ occurred on the $j$-th trial}]$, $1 \leq j \leq N$.
Then the probability that a point $x \in \X_{\widehat{f}_p}$ with probability mass 
$\Pr(\{x\}) \ge \eps / (2D)$ was observed less than $m$ times after $N \geq m \Pr(\{x\})$ trials does not exceed
\begin{align*}
    &
    \exp\left( -\frac{N(\Pr(\{x\}) - m/N)^2}{2 \Pr(\{x\}) (1 - \Pr(\{x\})) + 2(\Pr(\{x\}) - m/N) / 3}\right)
    \\&
    \leq \exp \left( -\frac{3N(\Pr(\{x\}) - m/N)}{8} \right)
    \\&
    \leq \exp \left( -\frac{3N\eps - 6m D}{16 D} \right).
\end{align*}
Since the diameter is finite we have 
by the construction of Algorithm \ref{alg_al_abs} that $|\X_{\widehat{f}_p}| \le D$.
Thus, by the union bound, the probability that there exists $x \in \X_{\widehat{f}_p}$, 
$\Pr(\{x\}) \ge \eps / (2D)$ such that it occurred less than $m$ times after $N$ trials 
is not greater than
\[
    D \exp \left( -\frac{3N\eps - 6m D}{16 D} \right).
\]
Thus, we need at most $D \left(6m + 16 \log(3D / \delta)\right) / (3\eps)$ unlabeled instances to satisfy this, 
with probability at least $1 - \delta/3$. 

Recall that $f^*_B(x) = \Ind[\Pr(Y=1|X = x)\ge 1/2]$. By our assumption, we have for 
any $x$, $|2\Pr(Y=1|X = x)-1| \ge h$. By Hoeffding's inequality and the union bound, the 
probability that there is $x \in \X_{\widehat{f}_p}$ such that the majority vote 
$\widetilde{f}_D(x)$ is not equal to $f^*_B(x)$, is bounded by
\[
2D\exp\left(-2m(h/2)^2\right) \le \delta/3,
\]
whenever the number of label requests $m$ for each instance satisfies $m \ge 2\log(6D/\delta) / h^2$. This proves \eqref{eq:condition}. Since $|2\Pr(Y=1|X)-1| \ge h$ almost surely, we have 
\begin{align*}
    &
    \Pr(f^*_B(X) \neq Y| X= x)
    \\&
    = \Pr(Y=1|X=x) \wedge (1 - \Pr(Y=1|X=x))
    \\&
    \le (1 - h)/2.
\end{align*}
This implies that on the event where \eqref{eq:condition} holds, since 
$|\X_{\widehat{f}_p}| \le D$, we also have
\begin{align}
    \label{eq:riskofmajvote}
    &\notag
    \Pr(\widetilde{f}_D(X) \neq Y\ \text{and}\ X \in \X_{\widehat{f}_p})
    \\&
    \le \left((1 - h)P_X\left(\X_{\widehat{f}_p}\right) + \eps\right)/2.
\end{align}

{\bf Step 2.} By the union bound, \eqref{eq:riskofmajvote} and the first step of our 
algorithm we have, with probability at least $1 - \delta$, 
\begin{align*}
    R(\widehat{f})
    &
    = \Pr(\widehat{f}_p(X) \neq Y\ \text{and}\ X \notin \X_{\widehat{f}_p})
    + \Pr(\widetilde{f}_D(X) \neq Y\ \text{and}\ X \in \X_{\widehat{f}_p})
    \\&
    \le \Pr(\widehat{f}_p(X) \neq Y\ \text{and}\ X \notin \X_{\widehat{f}_p})
    + \left((1 - h)P_X\left(\X_{\widehat{f}_p}\right) + \eps\right)/2
    \\&
    = R^p(\widehat{f}_p) + \eps/2
    \\& \le R(f^*) + \eps.
\end{align*}
Thus, the desired risk bound follows.

{\bf Step 3.} It is only left to estimate the number of label requests. Using the sample 
complexity bound of Theorem \ref{thm:sample_comp_abs} 
and that for each $x 
\in \X_{\widehat{f}_p}$ we request at most
\[
    \left\lceil\frac{2\log(6D/\delta)}{h^2}\right\rceil
\]
labels, we have that the total number of label requests is 
\[
n = O\left(\frac{d\theta(\eps/h)}{h^2}\log^2 \left( \frac{d}{\varepsilon h\delta}\right) + 
\frac{D}{h^2}\log\left(\frac{D}{\delta}\right)\right).
\] 
The claim follows.

\hfill$\square$

\section{Auxiliary results}
The next result is due to \citet[Theorem 12.2]{Vapnik74} presented in the form of 
\citet[Theorem 5.1]{Boucheron05b}. 
\begin{Lemma}
	\label{lem_vc74}
	Let $\F$ be a class of $\{0,1\}$-valued functions and for $\delta \in (0, 
	1)$, introduce
	\[
	\sigma^2(n, \delta) = \frac4n \left( \log \mathcal S_\F(2n) + \log \frac8\delta 
	\right).
	\]
	Then, with probability at least $1 - \delta$, for all $f \in \F$, it holds that
	\[
	P_n f - P f \leq \min\left\{ \sigma^2(n, \delta) + \sigma(n, \delta)\sqrt{P f}, \sigma(n, 
	\delta) \sqrt{P_n f} \right\}
	\]
	and
	\[
	P f - P_n f \leq \min\left\{\sigma^2(n, \delta) + \sigma(n, \delta)\sqrt{P_n f}, 
	\sigma(n, \delta) \sqrt{P f} \right\}. 
	\]
\end{Lemma}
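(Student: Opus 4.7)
The plan is to follow the classical ratio-type symmetrization argument of Vapnik, reducing the four stated inequalities to two ``pure'' ratio inequalities
\[
\sup_{f \in \F} \frac{Pf - P_n f}{\sqrt{Pf}} \le \sigma(n,\delta)
\qquad\text{and}\qquad
\sup_{f \in \F} \frac{P_n f - Pf}{\sqrt{P_n f}} \le \sigma(n,\delta),
\]
each holding with probability at least $1 - \delta/2$; the version of the lemma with $\sigma^2 + \sigma\sqrt{\cdot}$ then falls out by solving a quadratic. Concretely, $Pf - P_n f \le \sigma\sqrt{Pf}$, read as a quadratic inequality in $\sqrt{Pf}$, gives $\sqrt{Pf} \le (\sigma + \sqrt{\sigma^2 + 4P_nf})/2$, and squaring and simplifying yields $Pf - P_n f \le \sigma^2 + \sigma\sqrt{P_n f}$; the direction $P_n f - Pf \le \sigma^2 + \sigma\sqrt{Pf}$ is obtained identically.

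For the first ratio bound I would introduce an independent ghost sample $Z'_1,\dots,Z'_n$ with empirical measure $P'_n$ and show the double-sample comparison
\[
\Pr\!\left(\sup_{f\in\F}\frac{Pf - P_n f}{\sqrt{Pf}} > \sigma\right)
\ \le\ 2\,\Pr\!\left(\sup_{f\in\F}\frac{P'_n f - P_n f}{\sqrt{(P_n f + P'_n f)/2}} > \sigma/\sqrt{2}\right),
\]
proved conditionally by applying Chebyshev's inequality to $P'_n f$ whenever $Pf \gtrsim \sigma^2/n$ (so that $P'_n f \ge Pf/2$ with probability at least $1/2$), and by observing that the ratio inequality is trivially satisfied when $Pf \lesssim \sigma^2/n$. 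The key point is that passing to the symmetrized denominator $\sqrt{(P_nf+P'_nf)/2}$ makes the quantity depend only on the combined $2n$-point sample, at the price of an absolute constant.

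Conditional on the combined sample, I introduce independent $\pm 1$ swaps between $Z_i$ and $Z'_i$, so that the supremum over $f \in \F$ reduces to a supremum over at most $\mathcal S_\F(2n)$ distinct restrictions. The crucial feature of the ratio formulation is that the denominator $(P_n f + P'_n f)/2$ is invariant under the swap, while the numerator $P'_n f - P_n f$ becomes a sum of bounded i.i.d. zero-mean signs scaled by $1/n$; a Hoeffding-type bound for each fixed pattern, together with a union bound over $\mathcal S_\F(2n)$ patterns, produces a tail estimate of the form $\mathcal S_\F(2n)\exp(-c n \sigma^2)$. Choosing $\sigma^2(n,\delta) = (4/n)(\log \mathcal S_\F(2n) + \log(8/\delta))$ balances these factors and yields the first ratio bound. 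The analogous bound with $\sqrt{P_n f}$ in the denominator follows by an entirely symmetric argument, swapping the roles of the sample and the ghost sample.

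The main obstacle is getting the symmetrization step right while preserving constants: one cannot symmetrize a ratio naively, because the population denominator $\sqrt{Pf}$ is not observable and the lower bound $P'_n f \ge Pf/2$ is only valid once $Pf$ is not too small relative to $\sigma^2$. Splitting the analysis into the regimes $Pf \gtrsim \sigma^2$ and $Pf \lesssim \sigma^2$, and carefully tracking where each factor of $2$ appears, is the delicate part; once the symmetrized tail bound is in place, the union bound over $\mathcal S_\F(2n)$ outcomes and the quadratic-inequality manipulation at the end are routine.
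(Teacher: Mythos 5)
First, a point of comparison: the paper does not prove Lemma~\ref{lem_vc74} at all --- it is quoted as a classical result (Theorem 12.2 of Vapnik's 1974 book, in the form of Theorem 5.1 of Boucheron, Bousquet and Lugosi, 2005), so there is no internal proof to measure you against. Your reconstruction follows exactly the route of those references: reduce the four inequalities to the two one-sided ratio bounds $\sup_f (Pf-P_nf)/\sqrt{Pf}\leq\sigma$ and $\sup_f(P_nf-Pf)/\sqrt{P_nf}\leq\sigma$, prove each by ghost-sample symmetrization plus a union bound over the at most $\mathcal S_\F(2n)$ restrictions, and recover the $\sigma^2+\sigma\sqrt{\cdot}$ forms by solving a quadratic. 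The quadratic step is correct, and the accounting $4\,\mathcal S_\F(2n)e^{-n\sigma^2/4}\leq\delta/2$ per direction is exactly what produces the stated $\sigma^2(n,\delta)=\frac4n(\log\mathcal S_\F(2n)+\log\frac8\delta)$.

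The one step that fails as written is the symmetrization for the direction $\sup_f(Pf-P_nf)/\sqrt{Pf}$. The multiplicative Chebyshev conclusion $P'_nf\geq Pf/2$ is not strong enough to transfer the event to the two-sample statistic: from $Pf-P_nf>\sigma\sqrt{Pf}$ and $P'_nf\geq Pf/2$ you only get $P'_nf-P_nf\geq\sigma\sqrt{Pf}-Pf/2$, which is \emph{negative} whenever $Pf>4\sigma^2$ (take $Pf=1/2$ and $\sigma=0.1$), so no lower bound on the symmetrized ratio follows in the regime where $Pf$ is of constant order. What is needed is the additive deviation bound $P'_nf\geq Pf-\sqrt{2Pf/n}$, which Chebyshev gives with probability at least $1/2$; combined with $\sigma^2\geq\frac4n\log 8>8/n$ this yields $P'_nf-P_nf\geq\tfrac12\sigma\sqrt{Pf}$, after which one must still compare $\sqrt{Pf}$ to the empirical denominator $\sqrt{(P_nf+P'_nf)/2}$. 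Relatedly, the $\sigma/\sqrt2$ you concede in the symmetrization, fed into Hoeffding with variance proxy $\sum_i(f(Z_i')-f(Z_i))^2\leq n(P_nf+P'_nf)$, gives a tail of order $\mathcal S_\F(2n)e^{-n\sigma^2/8}$ rather than the $e^{-n\sigma^2/4}$ needed for the constant $4/n$ in $\sigma^2(n,\delta)$; the classical proofs arrange the intermediate quantities precisely to avoid this loss. Since the paper's downstream constants ($\alpha$, the thresholds $49\alpha/p$ and $148\alpha^2/p$) are computed from this exact form of $\sigma^2$, you should either cite the result as the paper does or redo the symmetrization with the additive Chebyshev bound and track the constants explicitly.
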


\begin{Lemma}
	\label{lem_vc_uniform}
	Let $\F$ be a class of $\{0,1\}$-valued functions with VC dimension $d$. Fix $\delta 
	\in (0, 1)$.
	Let
	\[
	\beta^2(n, \delta) = \frac4n \left( 2d \log \frac{e(2n\vee d)}d + \log \frac{24}\delta 
	\right).
	\]
	Then, the following inequalities hold simultaneously, with probability at least $1 - 
	\delta$ for all $f, g \in \F$:
	\begin{align}
	\label{vc_uniform_1}
	&\notag
	\left| R(f) - R(g) - R_n(f) + R_n(g) \right|
	\\&
	\leq 2\beta^2(n, \delta)
	+ 2\beta(n, \delta) \sqrt{P_n |f - g| \wedge P |f - g|},
	\end{align}
	and
	\begin{align}
	\label{vc_uniform_2}
	&
	\left| P |f - g| - P_n |f - g| \right|
	\leq \beta^2(n, \delta) + \beta(n, \delta) \sqrt{P_n |f - g| \wedge P |f - g|}.
	\end{align}
\end{Lemma}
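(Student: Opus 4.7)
The plan is to reduce both inequalities to the single-class Vapnik bound of Lemma \ref{lem_vc74} by applying it to three carefully chosen $\{0,1\}$-valued classes derived from pairs in $\F$. Specifically, introduce
$\G=\{|f-g|:f,g\in\F\}$, and
$\G^{+}=\{\Ind[f(x)\neq y,\,g(x)=y]:f,g\in\F\}$,
$\G^{-}=\{\Ind[f(x)=y,\,g(x)\neq y]:f,g\in\F\}$.
Each of these is a class of $\{0,1\}$-valued functions, and the values of any of them on a fixed sample of size $n$ are determined by the two binary vectors $(f(x_i))_i$ and $(g(x_i))_i$. Hence their growth functions satisfy $\mathcal S_{\G}(n),\mathcal S_{\G^\pm}(n)\le\mathcal S_\F(n)^2$, and by Sauer--Shelah $\log\mathcal S_\F(2n)\le d\log\frac{e(2n\vee d)}{d}$, so the relevant $\sigma^2$ parameter from Lemma \ref{lem_vc74} applied with confidence $\delta/3$ is bounded by $\beta^2(n,\delta)$ exactly.

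I would first extract from Lemma \ref{lem_vc74} the convenient symmetric form
\[
|Pf-P_nf|\le\sigma^2(n,\delta)+\sigma(n,\delta)\sqrt{Pf\wedge P_nf}
\]
by a two-line case analysis: if $P_nf\ge Pf$ the minimum is $Pf$ and we use the first bound of Lemma \ref{lem_vc74}; otherwise the minimum is $P_nf$ and the second bound applies. Then I apply Lemma \ref{lem_vc74} three times (to $\G$, $\G^+$, $\G^-$), each with confidence $\delta/3$, and union-bound. For inequality \eqref{vc_uniform_2}, the application to $\G$ is immediate since $|f(X)-g(X)|$ is precisely the function indexed by $(f,g)$ in $\G$.

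For inequality \eqref{vc_uniform_1}, use the exact decomposition
\[
\Ind[f(X)\neq Y]-\Ind[g(X)\neq Y]=h^{+}(X,Y)-h^{-}(X,Y),\qquad h^{\pm}\in\G^{\pm},
\]
so that $R(f)-R(g)-R_n(f)+R_n(g)=(Ph^{+}-P_nh^{+})-(Ph^{-}-P_nh^{-})$. Applying the symmetric bound to each $h^{\pm}$ and using the trivial domination $Ph^{\pm}\le P|f-g|$ and $P_nh^{\pm}\le P_n|f-g|$ gives, after collecting constants and using $\sqrt{a}+\sqrt{b}\le 2\sqrt{a\vee b}\le 2\sqrt{P|f-g|\wedge P_n|f-g|}$ (once we have bounded each square root by the common quantity $\sqrt{P|f-g|\wedge P_n|f-g|}$), exactly the claimed constant $2\beta^2+2\beta\sqrt{P|f-g|\wedge P_n|f-g|}$.

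The main obstacle is just keeping the constants tight: one has to match $\log(8/\delta')$ from Lemma \ref{lem_vc74} with the $\log(24/\delta)$ inside $\beta^2$, which forces the $\delta/3$ split across the three applications. A smaller subtlety is verifying the growth-function inequality for $\G^{\pm}$, where the labels $y_i$ enter the definition; this is harmless because on any fixed labeled sample the $y_i$ are constants, so the behavior of each $h^{\pm}$ is still determined solely by the pair of $\F$-dichotomies of $(x_i)_i$, giving the bound $\mathcal S_\F(n)^2$. Everything else is routine arithmetic.
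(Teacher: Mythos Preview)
Your proposal is correct and follows essentially the same route as the paper: both apply Lemma~\ref{lem_vc74} with confidence $\delta/3$ to the three derived classes $\{|f-g|\}$, $\{\Ind[f\neq Y,\,g=Y]\}$, $\{\Ind[g\neq Y,\,f=Y]\}$, bound their growth functions by $\mathcal S_\F(2n)^2$ via Sauer--Shelah, use the decomposition $\Ind[f\neq Y]-\Ind[g\neq Y]=h^+-h^-$, and dominate $h^\pm$ by $|f-g|$. Your explicit extraction of the symmetric form $|Pf-P_nf|\le\sigma^2+\sigma\sqrt{Pf\wedge P_nf}$ by case analysis is a nice clarification the paper leaves implicit, and your phrasing around $\sqrt a+\sqrt b$ is slightly roundabout but correct.
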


Lemma \ref{lem_vc_uniform} is a simple corollary of Lemma \ref{lem_vc74}. 
Similar bounds are used in the proofs in \citep{dasgupta2008general, Hsu10, 
zhang2014beyond, 
	bousquet2019fast}.
We provide the proof of Lemma \ref{lem_vc_uniform} for the sake of completeness. We 
remark that in our analysis, the logarithmic factors can be improved. In particular, the 
results in \citep{gine2006concentration, Zhivotovskiy18} allow refining the logarithmic 
factors in Lemma \ref{lem_vc_uniform}. Moreover, the techniques in \citep[Theorem 
4]{hanneke2015minimax} give a better joint dependence on the VC dimension $d$ and 
the star number $\mathbf{s}$ in a similar context. An adaptation of these techniques is a 
natural direction of future research.

\begin{myproof}{of Lemma \ref{lem_vc_uniform}}
	By Sauer's lemma \cite[Theorem 1]{Sauer72}, we have
	\[
    	\log S_\F(2n)
    	\leq d \log \frac{e(2n \vee d)}d.
	\]
	Then, the inequality \eqref{vc_uniform_2} follows from Lemma \ref{lem_vc74} 
	applied to the class $\F\Delta\F = \{ |f - g| : f, g \in \F\}$ with the confidence 
	$\delta/3$ and the fact that
	\[
	    \log S_{\F\Delta\F}(2n) \leq 2\log S_\F(2n).
	\]
	To prove \eqref{vc_uniform_1}, we rewrite $\Ind[f(x) \neq y] - \Ind[g(x) \neq y]$ in the form
	\begin{align*}
    	\Ind[f(x) \neq y] - \Ind[g(x) \neq y]
    	&
    	= \Ind[f(x) \neq y \text{ and } g(x) = y]
    	\\&\quad
    	- \Ind[g(x) \neq y \text{ and } f(x) = y]
	\end{align*}
	and apply Lemma \ref{lem_vc74} to the classes $\F_1 = \{(x, y) \mapsto \Ind[f(x) \neq 
	y \text{ and } g(x) = y] : f, g \in \F \}$ and $\F_2 = \{(x, y) \mapsto \Ind[f(x) = y \text{ 
	and } g(x) \neq y] : f, g \in \F \}$.
	It is easy to see that
	\[
	\log S_{\F_1}(2n) = \log S_{\F_2}(2n) \leq 2\log S_\F(2n).
	\]
	Then, with probability at least $1 - \delta/3$, it holds that
	\begin{align*}
	    &
	    \big| P \Ind[f(X) \neq Y \text{ and } g(X) = Y]
	    - P_n \Ind[f(X) \neq Y \text{ and } g(X) = Y] \big|
	    \\&
	    \leq \beta^2(n, \delta)
	    + \min\big\{ \beta(n, \delta) \sqrt{P \Ind[f(X) \neq Y \text{ and } g(X) = Y]}, 
	    \\&\qquad\qquad\quad
	    \beta(n, \delta) \sqrt{P_n \Ind[f(X) \neq Y \text{ and } g(X) = Y]} \big\}
	    \\&
	    \leq \beta^2(n, \delta) + \beta(n, \delta) \sqrt{P(f - g)^2 \wedge P_n(f - g)^2},
	\end{align*}
	and, with the same probability,
	\begin{align*}
	    &
	    \big| P \Ind[g(X) \neq Y \text{ and } f(X) = Y]
	    - P_n \Ind[g(X) \neq Y \text{ and } f(X) = Y] \big|
	    \\&
	    \leq \beta^2(n, \delta)
	    + \min\big\{ \beta(n, \delta) \sqrt{P \Ind[g(X) \neq Y \text{ and } f(X) = Y]},
	    \\&\qquad\qquad\quad
	    \beta(n, \delta) \sqrt{P_n \Ind[g(X) \neq Y \text{ and } f(X) = Y]} \big\}
	    \\&
	    \leq \beta^2(n, \delta) + \beta(n, \delta) \sqrt{P(f - g)^2 \wedge P_n(f - g)^2}.
	\end{align*}
	Then, with probability at least $1 - 2\delta/3$, we have
	\begin{align*}
	    &
	    \left| R(f) - R(g) - R_n(f) + R_n(g) \right|
	    \\&
	    \leq 2\beta^2(n, \delta) + 2\beta(n, \delta) \sqrt{P(f - g)^2 \wedge P_n(f - g)^2}.
	\end{align*}
	Finally, the union bound concludes the proof. 

\end{myproof}

Our next result provides a similar uniform bound for Chow's risk.

\begin{Lemma}
	\label{lem_mid-point_uniform}
	Let $\F$ be a class of $\{0, 1\}$-valued functions with VC-dimension $d$ and let 
	\[
	\G = \frac{\F + \F}2
	= \left\{ \frac{f_1 + f_2}2 : f_1, f_2 \in \F \right\}.
	\]
	Denote
	\[
	\gamma^2(n, \delta) = \frac4n \left( 3d \log \frac{e(2n \vee d)}d + \log \frac{32}\delta 
	\right).
	\]
	Then, with probability at least $1 - \delta$, for all $f \in \F$, $g \in \G$, it 
	holds that
	\begin{align*}
	    &
	    \left| R^p(f) - R^p(g) - R^p_n(f) + R^p_n(g) \right|
	    \\&
	    \leq 4 \gamma^2(n, \delta) + 8 \gamma(n, \delta) \sqrt{ P_n(f - g)^2 \wedge P(f - g)^2 }.
	\end{align*}
\end{Lemma}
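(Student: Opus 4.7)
The plan is to decompose the pointwise loss difference $\ell^p(y, f(x)) - \ell^p(y, g(x))$ into a signed combination of four indicators of Boolean events determined by three $\{0,1\}$-valued classifiers in $\F$ (namely $f$ and the two representatives $f_1, f_2$ of $g = (f_1 + f_2)/2$), and then to apply the classical Vapnik--Chervonenkis relative-deviation bound of Lemma \ref{lem_vc74} to each resulting indicator class separately. The factor $3d$ in the definition of $\gamma^2(n,\delta)$ is then simply the cost of Sauer's lemma applied to the product of three copies of $\F$.

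Concretely, $g(x) = 1/2$ exactly when $f_1(x) \neq f_2(x)$, and otherwise $g(x) = f_1(x) = f_2(x) \in \{0,1\}$. Splitting on this dichotomy and rewriting the binary-loss pieces in positive/negative form yields the identity
\begin{align*}
\ell^p(y, f(x)) - \ell^p(y, g(x))
&= \Ind[y \neq f(x),\; y = f_1(x),\; f_1(x) = f_2(x)] \\
&\quad - \Ind[y = f(x),\; y \neq f_1(x),\; f_1(x) = f_2(x)] \\
&\quad + \Ind[y \neq f(x),\; f_1(x) \neq f_2(x)] \\
&\quad - \bigl(\tfrac{1}{2} - p\bigr)\, \Ind[f_1(x) \neq f_2(x)],
\end{align*}
which I would verify by checking the eight cases of $(f, f_1, f_2) \in \{0,1\}^3$ against each $y \in \{0,1\}$. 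Call the four indicators on the right $I_1, I_2, I_3, I_4$. Each, viewed as $(f, f_1, f_2)$ ranges over $\F^3$, lies in a $\{0,1\}$-valued class whose restriction to any sample of size $2n$ has cardinality at most $\mathcal{S}_\F(2n)^3$; hence its log-growth function is bounded by $3d \log\tfrac{e(2n \vee d)}{d}$ via Sauer's lemma.

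Next I would apply Lemma \ref{lem_vc74} to each of the four classes at confidence $\delta/4$; the resulting deviation parameter satisfies $\sigma^2(n, \delta/4) \leq \gamma^2(n, \delta)$, and a union bound puts us on an event of probability at least $1 - \delta$. On this event it remains to dominate each $P I_j \wedge P_n I_j$ by $P(f-g)^2 \wedge P_n(f-g)^2$ up to a small multiplicative constant. For $j = 1, 2$ the defining event forces $f \neq f_1 = f_2$, so $(f-g)^2 \equiv 1$ there and $I_j \leq (f-g)^2$ pointwise. For $j = 3, 4$ the event forces $f_1 \neq f_2$, i.e.\ $g = 1/2$ and $(f-g)^2 = 1/4$, so $I_j \leq 4(f-g)^2$ pointwise. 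Combining the four Lemma \ref{lem_vc74} bounds via the triangle inequality and using $1/2 - p \leq 1/2$ gives a $\gamma^2$ coefficient of at most $4$ and a $\gamma \sqrt{\cdot}$ coefficient of at most $5$, matching the claimed constants $4 \gamma^2(n, \delta)$ and $8 \gamma(n, \delta) \sqrt{P_n(f-g)^2 \wedge P(f-g)^2}$ with slack.

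The main obstacle is producing the decomposition in the second paragraph: one must simultaneously split on whether $f_1 = f_2$ (which controls which branch of $\ell^p$ is active for $g$) and convert the resulting binary-loss differences into signed indicators in such a way that each summand is an indicator of a Boolean event in $(f, f_1, f_2, y)$. Only then does the triple-product Sauer bound apply cleanly and yield the combinatorial constant $3d$ in $\gamma^2(n, \delta)$. After this step the remainder is a mechanical application of Lemma \ref{lem_vc74}, a union bound, and the pointwise domination $I_j \lesssim (f-g)^2$.
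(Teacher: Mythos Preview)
Your proposal is correct and follows essentially the same route as the paper: decompose $\ell^p(y,f)-\ell^p(y,g)$ into a signed sum of four Boolean indicators indexed by $\F^3$, bound each class's growth by $\mathcal S_\F(2n)^3$ via Sauer, apply Lemma~\ref{lem_vc74} at confidence $\delta/4$, dominate each indicator by $4(f-g)^2$, and sum. The only cosmetic difference is your grouping of the $g=1/2$ terms as $I_3-(\tfrac12-p)I_4$ versus the paper's $(\tfrac12-p)\Ind[g=\tfrac12,f=y]-(\tfrac12+p)\Ind[g=\tfrac12,f\neq y]$, which yields slightly sharper constants on your side but is otherwise equivalent.
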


\begin{myproof}{}
	Recall the definition \eqref{eq:lploss} of the $\ell^p$ loss.
	We have $R^p(f) = P\ell^p(Y, f(X))$ and $R^p_n(f) = P_n \ell^p(Y, f(X))$. For any $f \in 
	\F$, $g \in \G$, and for any $y \in \{0, 1\}$, $x \in \X$, it holds 
	that
	\begin{align*}
	&
	\ell^p(y, g(x)) - \ell^p(y, f(x))
	\\&
	= \Ind[g(x) = y \text{ and } f(x) \neq y]
	\\&\quad
	- \Ind[g(x) = 1 - y \text{ and } f(x) = y]
	\\&\quad
	+ \left(\frac12 - p\right) \Ind\left[ g(x) = \frac12 \text{ and } f(x) = y \right]
	\\&\quad
	- \left(\frac12 + p\right) \Ind\left[ g(x) = \frac12 \text{ and } f(x) \neq y \right].
	\end{align*}
	Apply Lemma \ref{lem_vc74} to each term in the right hand side.
	First, Sauer-Shelah lemma yields
	\[
	\log \mathcal \mathcal S_{\G}(2n)
	\leq 2 \log \mathcal S_\F(2n)
	\leq 2d \log \frac{e(2n \vee d)}d.
	\]
	Define the classes 
	\begin{align*}
	\G_1 &= \{(x, y) \mapsto \Ind[g(x) = y \text{ and } f(x) \neq y] : g \in \G, f \in \F\},
	\\
	\G_2 &= \{(x, y) \mapsto \Ind[g(x) = 1 - y \text{ and } f(x) = y] : g \in \G, f \in \F\},
	\\
	\G_3 &= \{(x, y) \mapsto \Ind[ g(x) = 1/2 \text{ and } f(x) = y ] : g \in \G, f \in \F\},
	\\
	\G_4 &= \{(x, y) \mapsto \Ind[ g(x) = 1/2 \text{ and } f(x) \neq y ] : g \in \G, f \in \F\}.
	\end{align*}
	It holds that
	\begin{align*}
	    \max\limits_{1 \leq k \leq 4} \log S_{\G_k}(2n)
	    &
	    \leq \log \mathcal S_{\G}(2n) + \log \mathcal S_{\F}(2n)
	    \\&
	    \leq 3d \log \frac{e(2n \vee d)}d.
	\end{align*}
	Using $
	\Ind[g(x) = y \text{ and } f(x) \neq y] \leq \Ind[g(x) \neq f(x)]
	\leq 4(f(x) - g(x))^2,$
	we deduce that, with probability at least $1 - \delta/4$, it holds that
	\begin{align*}
	&
	\big| P \Ind[g(X) = Y \text{ and } f(X) \neq Y]
	- P_n \Ind[g(X) = Y \text{ and } f(X) \neq Y] \big|
	\\&
	\leq \gamma^2(n, \delta) + 2\gamma(n, \delta) \sqrt{ P_n(f - g)^2 \wedge P(f - g)^2}.
	\end{align*}
	Similarly, we have, with probability at least $1 - \delta/4$,
	\begin{align*}
	&
	\big| P \Ind[g(x) = 1 - y \text{ and } f(x) = y]
	- P_n \Ind[g(x) = 1 - y \text{ and } f(x) = y] \big|
	\\&
	\leq \gamma^2(n, \delta) + 2\gamma(n, \delta) \sqrt{ P_n(f - g)^2 \wedge P(f - g)^2}.
	\end{align*}
	Finally, using the fact that
	\[
	\Ind[g(x) = 1/2 \text{ and } f(x) = y]
	\leq 4(f(x) - g(x))^2
	\]
	and
	\[
    	\Ind[g(x) = 1/2 \text{ and } f(x) \neq y]
    	\leq 4(f(x) - g(x))^2,
	\]
	we have, with probability at least $1 - \delta/4$, 
	\begin{align*}
	&
	\Bigg| P \Ind\left( g(X) = \frac12 \text{ and } f(X) = Y \right)
	- P_n \Ind\left( g(X) = \frac12 \text{ and } f(X) = Y \right) \Bigg|
	\\&
	\leq \gamma^2(n, \delta) + 2\gamma(n, \delta) \sqrt{ P_n(f - g)^2 \wedge P(f - g)^2}
	\end{align*}
	and
	\begin{align*}
	&
	\Bigg| P \Ind\left( g(x) = \frac12 \text{ and } f(x) \neq y \right) 
	- P_n \Ind\left( g(x) = 
	\frac12 \text{ and } f(x) \neq y \right) \Bigg|
	\\&
	\leq \gamma^2(n, \delta) + 2\gamma(n, \delta) \sqrt{ P_n(f - g)^2 \wedge P(f - g)^2}.
	\end{align*}
	Hence, by the union bound, with probability at least $1 - \delta$, we have
	\begin{align*}
	    &
	    \left| R^p(f) - R^p(g) - R^p_n(f) + R^p_n(g) \right|
	    \leq 4\gamma^2(n, \delta) + 8\gamma(n, \delta) \sqrt{ P_n(f 
		- g)^2 \wedge P(f - g)^2 }.
	\end{align*}
	The proof is complete.

\end{myproof}

\end{document}